\documentclass{article}
\usepackage{iclr2027_conference,times}
\iclrfinalcopy 

\usepackage{amsmath,amsfonts,bm}

\def\eqref#1{equation~\ref{#1}}

\def\1{\bm{1}}

\DeclareMathAlphabet{\mathsfit}{\encodingdefault}{\sfdefault}{m}{sl}
\SetMathAlphabet{\mathsfit}{bold}{\encodingdefault}{\sfdefault}{bx}{n}

\usepackage{hyperref}
\usepackage{url}
\hypersetup{hidelinks,hypertexnames=false,
  pdftitle={Grab a Coffee: Future-Aware Guidance for Discrete Diffusion with Compiled Objectives},
  pdfauthor={Hua (Edward) XU, Dongxin Li, Gwen Yidou-Weng, Guy Van den Broeck, Wei Wang, Anji Liu}}
\usepackage{booktabs}
\usepackage{colortbl}
\usepackage{amsthm}
\usepackage{algorithm}
\usepackage[noend]{algpseudocode}
\usepackage{array}
\usepackage{float}
\usepackage{stfloats}
\fnbelowfloat
\usepackage{graphicx}
\usepackage{longtable}
\usepackage{mathtools}
\usepackage{microtype}
\usepackage{multirow}
\usepackage{placeins}
\usepackage{tabularx}
\usepackage{xcolor}
\usepackage{tikz}
\usetikzlibrary{arrows.meta,calc,positioning}

\definecolor{BaseColor}{HTML}{26366F}
\definecolor{RewardColor}{HTML}{1A9B73}
\definecolor{OursColor}{HTML}{426CB5}
\definecolor{CompareColor}{HTML}{EF7E22}
\definecolor{MutedColor}{HTML}{8290AA}
\definecolor{WarningColor}{HTML}{E87891}
\definecolor{PanelFill}{HTML}{F1F3FB}
\definecolor{DraftColor}{HTML}{A95D20}
\definecolor{oursrow}{RGB}{240,244,252}
\definecolor{DreamHeader}{HTML}{D96E1B}
\definecolor{LLaDAHeader}{HTML}{426CB5}
\definecolor{HeaderFill}{HTML}{F1F3F8}
\definecolor{TargetHeader}{HTML}{26366F}
\definecolor{QualityHeader}{HTML}{66728E}
\definecolor{BudgetAmber}{HTML}{D96E1B}
\definecolor{BudgetCyan}{HTML}{7898D0}
\definecolor{BudgetGray}{HTML}{8290AA}
\definecolor{BudgetPurple}{HTML}{8B78D2}
\definecolor{BudgetBrown}{HTML}{397B69}
\definecolor{Fig3Coffee}{HTML}{607BAE}
\definecolor{Fig3Base}{HTML}{778198}
\definecolor{Fig3CDD}{HTML}{78A99A}
\definecolor{Fig3CDM}{HTML}{C78D9D}
\definecolor{Fig3Amber}{HTML}{D4A16F}
\definecolor{Fig3Cyan}{HTML}{91A9CE}
\definecolor{Fig3Gray}{HTML}{A0A6B0}
\definecolor{Fig3Purple}{HTML}{9A8DBA}
\definecolor{Fig3Teal}{HTML}{6E9B8D}
\definecolor{Fig3Dream}{HTML}{C38C62}
\definecolor{Fig3LLaDA}{HTML}{7189B0}

\newcommand{\method}{\textsc{Coffee}}

\newcolumntype{C}[1]{>{\centering\arraybackslash}p{#1}}

\newtheorem{theorem}{Theorem}[section]
\newtheorem{proposition}[theorem]{Proposition}
\newtheorem{lemma}[theorem]{Lemma}
\newtheorem{corollary}[theorem]{Corollary}

\usepackage{caption}
\title{Grab a Coffee: Future-Aware Guidance for
Discrete Diffusion with Compiled Objectives}
\author{\textbf{Hua (Edward) XU}$^{1,*}$, \textbf{Dongxin Li}$^{1,*}$, \textbf{Gwen Yidou-Weng}$^{2}$,\\
\textbf{Guy Van den Broeck}$^{2}$, \textbf{Wei Wang}$^{1}$, \textbf{Anji Liu}$^{3}$\\[0.5em]
{\normalfont\small $^{1}$Data Science and Analytics Thrust,}\\
{\normalfont\small The Hong Kong University of Science and Technology (Guangzhou), Guangzhou, China}\\
{\normalfont\small $^{2}$Department of Computer Science, University of California, Los Angeles, USA}\\
{\normalfont\small $^{3}$School of Computing, National University of Singapore, Singapore}\\[0.3em]
{\normalfont\small $^{*}$Equal contribution.}}

\begin{document}
\maketitle
\lhead{Preprint}

\begin{abstract}
Discrete diffusion models generate sequences by iteratively resolving multiple
tokens in parallel, offering a flexible alternative to left-to-right
generation. However, guiding this process with a sequence-level objective is
difficult because the value of one unresolved token depends on the other tokens
with which it can form a high-reward sequence. Enumerating all such completions
makes the whole guidance computation grow exponentially with the number of
unresolved positions. We introduce \method{}, a plug-and-play framework that
avoids this enumeration by separating sequence dependence from the objective.
At each diffusion step, a target-free carrier absorbs the marginal token
distributions predicted by the denoiser to construct a joint model over the
unresolved tokens, while a compiled finite-state model records how their
combinations affect the sequence-level preference. Pairing their states allows
\method{} to transfer global preferences to unresolved positions and sample a
clean reconstruction without retraining the diffusion model. The same
framework supports explicit hard constraints and learned soft objectives. We
evaluate \method{} across multiple symbolic, language, and biological benchmarks,
where it achieves strong control results with task-dependent quality and
diversity trade-offs.
By making objectives available to inference rather than only evaluation,
\method{} brings joint conditioning, completion-weighted guidance,
and optimization-based constraints into pretrained neural generation,
showing the potential of neural-symbolic methods in diffusion guidance.
\end{abstract}

\section{Introduction}
\label{sec:introduction}

Discrete diffusion models offer a flexible approach to generating
language and biological sequences
\citep{nie2025largelanguagediffusionmodels,ye2025dream7bdiffusionlarge,yang2026d3lm}.
By repeatedly reconstructing tokens from corrupted inputs, they allow
multiple positions to be predicted together using context from either
side of the sequence.
In many applications, however, generating plausible samples is only
a starting point: the output should also satisfy a constraint, express
a desired attribute, or achieve a high sequence-level reward.\footnote{In this paper, we are using ``reward'', ``condition'', ``constraint'' and ``preference'' interchangeably.}
Inference-time guidance offers a way to introduce these objectives
without retraining the pretrained generator.
The challenge is to turn a preference over complete sequences into
useful guidance while the current state is still partially corrupted.
Consider the masked sentence
\texttt{[MASK] [MASK] is in [MASK]} in Fig.~\ref{fig:fig1}, with a preference
for the phrase \texttt{New York}.
The preference concerns the term \texttt{New York} together,
but a scorer itself without knowing joint distribution of tokens
does not specify how the unresolved
tokens should be chosen jointly.
Directly sampling from the distributions given by the denoiser
may
produce mismatched combinations such as \texttt{New Kong}, failing to capture cross-token dependence.
The scorer's preference may also affect the distributions at other positions,
ideally coupling the preferred term with \texttt{USA},
though the scorer itself may not be able to associate them together.
Therefore, using such reward to guide the generation process
requires a joint model which can couple these choices and propagate the phrase
preference to the country prediction through the association
between \texttt{New York} and \texttt{USA}, even though the scorer
does not directly reward the country.
Effective guidance therefore needs to account for both
\emph{how token choices depend on one another} and
\emph{how their combinations affect the sequence-level objective}.

More generally, let $x_t$ denote the current diffusion state and
$x_0$ a possible clean reconstruction.
Given a neutral, unguided distribution
$q_0(x_0\mid x_t)$ which models the joint token distribution
before imposing condition $C$,
and a nonnegative sequence
weight $W_C(x_0)$ indicating sequence-level preference,
for a candidate token $u$ at an unresolved position $i$,
we consider the guided reconstruction distribution
\begin{equation}
q_C(x_0^i=u\mid x_t)
\propto
q_0(x_0^i=u\mid x_t)\,
\mathbb{E}_{q_0}\!\left[
W_C(X_0)\mid x_t,\;X_0^i=u
\right].
\label{eq:intro-completion-guidance}
\end{equation}
Our goal is to sample from $q_C$, which brings two computation challenges.
\textbf{(1)} The expectation depends on how unresolved tokens occur
together, i.e. the joint distribution $q_0$,
whereas a single denoiser can only provide the factorized approximation
$\prod_{i\in\mathcal M_t}p_\theta(x_0^i\mid x_t)$
that does not by itself specify these dependencies.
With unresolved
positions $\mathcal M_t$, naively computing the joint
requires
$|\mathcal V|^{|\mathcal M_t|}$ operations
over vocabulary $\mathcal V$
.
\textbf{(2)}
Even with access to $q_0(x_0 \mid x_t)$,
evaluating the expectation of $W_C$ still requires scoring the sequence-level preference
over all possible completions separately,
which also requires
$|\mathcal V|^{|\mathcal M_t|}$ operations (Fig.~\ref{fig:fig1}(a)).
The problem is thus not simply to score a completed sequence,
but \textbf{to obtain coupling token choices} per diffusion step and \textbf{to evaluate sequence-level preferences}
while the current state is
still partially corrupted.

\begin{figure}
    \centering
    \includegraphics[width=1\linewidth]{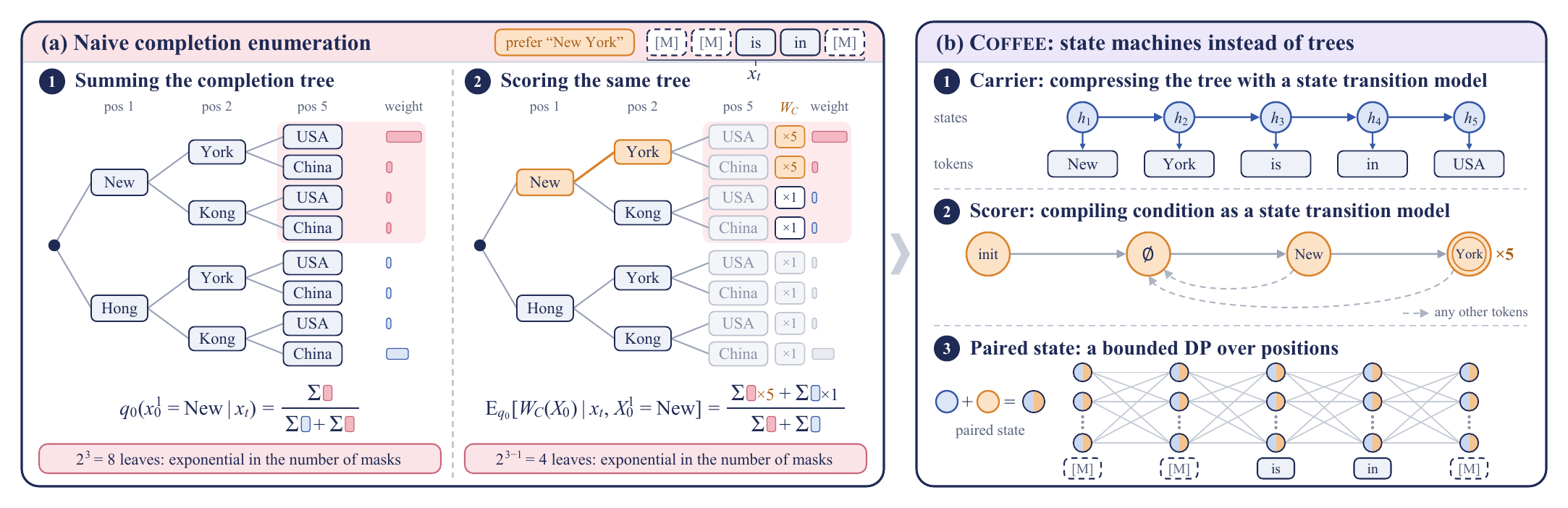}
    \caption{
    \textbf{Sequence-level guidance combines preferences with a model of possible
completions.} (a) Each root-to-leaf path specifies a clean reconstruction
with neutral completion weights given by $q_\theta$.
With two token choices at
each of three unresolved positions, explicit enumeration produces $2^3$
leaves. 
Calculating quantities in 
Eq.~\ref{eq:intro-completion-guidance}
with such way is therefore hard with exponential complexity.
(b)
\method{} uses state machines to 
represent token dependence
and record pattern progress. Their product permits histories reaching
the same state at the same position to share a continuation calculation,
replacing path-by-path enumeration within the compiled model.}
    \label{fig:fig1}
\end{figure}

Existing diffusion-guidance methods approach control through several routes,
including attribute estimates on noisy inputs
\citep{nisonoff2025guidance,schiff2025simple} and clean-sample search that
evaluates rewards on completed sequences and iteratively refines them
\citep{phunyaphibarn2026csmc}. These methods provide ways to use sequence-level
objectives during generation. A separate line of work restores the
cross-position dependence absent from factorized denoiser outputs. CoDD, for
example, augments parallel diffusion predictions with a joint model
\citep{li2026codd}. Together, these directions provide the two ingredients
needed by the example above: a sequence-level preference and a model of
how token choices depend on each other.

We then introduce \method{}, a plug-and-play inference-time guidance
framework based on graphical models. A target-free graphical model,
called the \emph{dependence carrier},
is used to capture the joint dependency with diffusion outputs, and
a \emph{compiled objective} expresses the sequence preference through
local state transitions and weights, and is able to score the sequence-level preference on a corrupted sentence.
Here, our graphical model summarizes the information needed by the remaining computation into latent states,
and hence we are able to run dynamic programming (DP) algorithms on the latent states,
which avoids explicitly enumerating all the complete sequences.
With proper parameterization and model construction, the resulting cost will be polynomial in the problem size,
instead of exponential as we previously discussed.
At each diffusion step, \method{} constructs this joint model from the
current denoiser predictions, computes the guided probabilities, and samples a
clean reconstruction for the diffusion sampler's existing update rule.
The same framework can support both hard constraints and soft preferences by compiling
the objective or a fitted structured surrogate
into different forms of graphical models, which can enable us to perform exact reasoning and probability inference.
It is a plug-and-play method, leaving pretrained diffusion
parameters unchanged and saving time for compute-heavy parameter tuning.




\section{Method}
\label{sec:method}

\subsection{Problem Formulation}
\label{sec:problem}

At a fixed diffusion step, we seek a clean reconstruction that is consistent
with the denoiser's predictions and preferred by a sequence-level condition.
Let $x_t=(x_t^1,\ldots,x_t^L)$ denote the current diffusion state and
$x_0=(x_0^1,\ldots,x_0^L)$ a possible clean reconstruction over vocabulary
$\mathcal V$.
For absorbing-mask denoising, $\mathcal O_t$ contains
all the positions that are not masked,
and the unresolved
positions form $\mathcal M_t=[L]\setminus\mathcal O_t$, where
$[L]=\{1,\ldots,L\}$.
In one denoising step,
we can start from the predicted token distributions supplied by one forward pass through
the frozen denoiser,
keeping observed tokens fixed and then defining the diffusion evidence
\begin{equation}
e_{t,i}(u)=
\begin{cases}
\mathbf 1[u=x_t^i], & i\in\mathcal O_t,\\
p_\theta(x_0^i=u\mid x_t), & i\in\mathcal M_t,
\end{cases}
\qquad u\in\mathcal V,
\label{eq:evidence}
\end{equation}
where $p_\theta(x_0^i=u\mid x_t)$ is the frozen denoiser's clean-token
conditional at position $i$.
To perform reward-guided generation, these token-wise marginals need to be used to construct the joint distribution
$q_0(x_0\mid x_t)$ over clean reconstructions
and then impose the preference $C$ through a nonnegative sequence weight $W_C$.
Putting everything together gives us
\begin{equation}
q_C(x_0\mid x_t)=\frac{q_0(x_0\mid x_t)W_C(x_0)}{Z_C(x_t)}.
\label{eq:guided-law}
\end{equation}
Its normalizer sums over completions consistent with the observed tokens and,
because $q_0(\cdot\mid x_t)$ is normalized on that support, is the expected
condition weight under the neutral joint distribution:
\begin{equation}
Z_C(x_t)
=\sum_{x_0^{\mathcal M_t}}q_0(x_0\mid x_t)W_C(x_0)
=\mathbb E_{x_0\sim q_0(\cdot\mid x_t)}[W_C(X_0)]
,\qquad x_0^{\mathcal O_t}=x_t^{\mathcal O_t}
.
\label{eq:completion-partition}
\end{equation}
Different values of $W_C$ could have different interpretations. For example, setting $W_C\equiv1$ leaves $q_0$ unchanged, while a hard condition uses
$W_C(x_0)=\mathbf 1[x_0\models C]$ to exclude candidates that violate $C$,
where $x_0\models C$ means that the
complete sequence satisfies $C$. A soft condition uses
$W_C(x_0)=\exp\{\lambda R_C(x_0)\}$, where $R_C$ is a sequence-level
reward,
where the strength $\lambda\geq0$ controls the preference for larger
scores.
Specially, when
$W_C(x_0)=q(C\mid x_0)$ is an attribute likelihood, the same
expression gives Bayesian conditioning.

The single-token conditional in Eq.~\ref{eq:intro-completion-guidance} is a
marginal of the joint distribution in Eq.~\ref{eq:guided-law}. Our goal is
to sample that joint reconstruction rather than draw each token independently
from its marginal.
As discussed previously, two main challenges are: \textbf{(1) coupling token choices} despite the
denoiser's factorized output, and \textbf{(2) evaluating the sequence objective}
while the tokens that determine its value remain unresolved.
These challenges block the way of obtaining the quantities needed in
Eq.~\ref{eq:guided-law} and \ref{eq:completion-partition},
resulting in a $\mathcal{O}(|\mathcal V|^{|\mathcal M_t|})$ level complexity (Fig.~\ref{fig:fig1}(a)).
\method{} bypasses the challenges with state-transition graphical
models in which a \emph{dependence carrier} retains information linking token choices,
and a \emph{compiled objective} tracks their effect on the condition (Fig.~\ref{fig:method-overview}(A)).
In the later sections, we will construct the carrier first, then extend its calculation to the objective.

\begin{figure}[thbp]
  \centering
  \includegraphics[width=\linewidth]{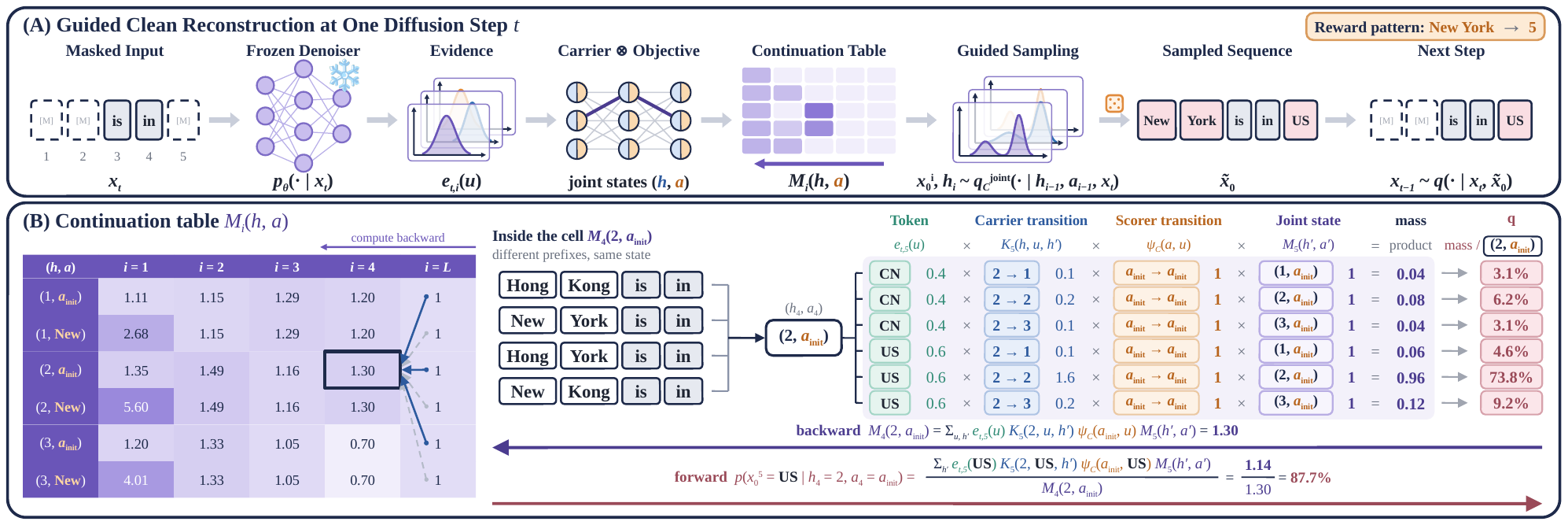}
  \caption{\textbf{Joint states support tractable computation for joint sampling.}
The upper panel shows how \method{} transforms diffusion marginals into guided joint distributions. 
With compiled states, \method{} runs DP over a continuation table
to obtain probability masses, based on which we can get normalized probabilities.
The lower panel uses a toy example to dive into the DP process in the table 
to provide an intuitive understanding of 
Eq.~\ref{eq:product-recurrence} and~\ref{eq:guided-sampling-normalized}.
In the table,
columns index sequence boundaries and rows index joint
states $(h,a)$.
Each $M_i(h,a)$ sums the weights of the remaining token-state paths,
and generation paths sharing the same boundary state can
therefore reuse one entry.
The expanded example sums six consecutive states $(u,h')$ 
to obtain $M_4(2,a_{\mathrm{init}})$.
Normalizing 
\texttt{US}'s probability mass with the previous quantity 
gives the joint probabilities of the country.
}
  \label{fig:method-overview}
\end{figure}

\subsection{Challenge 1: Constructing Neutral Joint Token Distribution}
\label{sec:carrier}

Directly sampling the denoiser distributions gives the fully factorized distribution
$\prod_i e_{t,i}(x_0^i)$, and the most naive way to construct a joint one is to expand one token at a
time while retaining every possible choice, producing a computation tree with
$|\mathcal V|^{|\mathcal M_t|}$ leaves (Fig.~\ref{fig:fig1}(a)). Inspired by
CoDD~\citep{li2026codd}, we are compressing the computation tree with states in
state transition graphical models (called \emph{carriers}), and the
denoiser marginals are used as local evidence absorbed by our carrier to
capture the joint dependency.
For simplicity, here we are using chain structured Markov models to 
explain how \method{} works.
Let $\omega$ be the parameters of our carrier and $h_i\in\mathcal H$ be the carrier state after position $i$,
we use nonnegative local factors
\begin{equation}
K_i(h,u,h';\omega)\geq 0,
\label{eq:carrier-kernel}
\end{equation}
where $K_i(h,u,h';\omega)$ couples token $u$ with a state transition.
We suppress $\omega$ below. With fixed $h_0$, we can use these factors and the denoiser
evidence jointly to construct the neutral joint distribution
\begin{equation}
q_0(x_0,h_{1:L}\mid x_t)
\propto \prod_{i=1}^{L}
e_{t,i}(x_0^i)K_i(h_{i-1},x_0^i,h_i),
\label{eq:carrier-joint}
\end{equation}
\begin{equation}
q_0(x_0\mid x_t)
=\sum_{h_{1:L}}q_0(x_0,h_{1:L}\mid x_t).
\label{eq:carrier-marginal}
\end{equation}
If we fix the carrier state $h_i$ after position $i$, a completion from this state
specifies both the remaining clean tokens $x_0^{i+1:L}$ and the carrier states
$h_{i+1:L}$ connecting them. Its remaining probability weight is the product of the
denoiser evidence and carrier factors at positions $i+1$ through $L$.
Due to the Markov property,
any two
partial paths ending in this state have the same weighted completions.
We therefore sum their weights once for each state, defining the total
continuation weight
\begin{equation}
\begin{aligned}
B_i(h_i)
&=\sum_{x_0^{i+1:L},h_{i+1:L}}
\prod_{j=i+1}^{L}e_{t,j}(x_0^j)K_j(h_{j-1},x_0^j,h_j).
\end{aligned}
\label{eq:carrier-remaining-mass}
\end{equation}
Here, $h_i$ is fixed and the sum ranges over every assignment to the remaining
tokens and carrier states.
This quantity is the normalizer for the distribution of completions
conditioned on $h_i$ and $x_t$.
To compute such quantity, we can start with considering its successor $h'$, where
the current factor $e_{t,i}(u)K_i(h,u,h')$
multiplies the total continuation weight $B_i(h')$ from the successor state,
\begin{equation}
\begin{aligned}
B_{i-1}(h)&=\sum_{u\in\mathcal V}\sum_{h'\in\mathcal H}
e_{t,i}(u)K_i(h,u,h')B_i(h'),
\qquad B_L(h)=1,
\end{aligned}
\label{eq:carrier-recurrence}
\end{equation}
which gives us a recurrence relationship between $B_i$ and $B_{i-1}$ and can be used to perform DP. Hence, all $B_i$ with $i=1..L$ can be computed in polynomial time.
Working backward to the fixed start state gives
$B_0(h_0)=Z_0(x_t)$, and that is exactly the normalizer of
Equation~\ref{eq:carrier-joint}.
For a reachable state $h$ with $B_{i-1}(h)>0$, we can partition its succeeding
completions by the next token-state pair $(u,h')$, which has a probability weight
$e_{t,i}(u)K_i(h,u,h')B_i(h')$.
To obtain the distribution of $(x_0^i=u,h_i=h')$, we just need to normalize with all the local factors at this state, which gives
\begin{equation}
\begin{aligned}
q_0(x_0^i=u,h_i=h'\mid h_{i-1}=h,x_t)
&\quad=
\frac{e_{t,i}(u)K_i(h,u,h')B_i(h')}
{\sum_{v,g}e_{t,i}(v)K_i(h,v,g)B_i(g)}
\\
&\quad=
\frac{e_{t,i}(u)K_i(h,u,h')B_i(h')}
{B_{i-1}(h)}.
\end{aligned}
\label{eq:carrier-conditional}
\end{equation}
With all $B_i$ computed, we can start at $h_0$ and draw
$(x_0^i, h_i)$ pairs from $i=1$ to $L$ using these conditionals in polynomial time, which is equivalent to sampling from
Equation~\ref{eq:carrier-joint} if we
discard the sampled carrier states from the sampled results.
The carrier is trained without any task-specific preference $C$ and is therefore
target-free.
During generation, the carrier stays fixed while each diffusion
step supplies fresh denoiser evidence. Carrier construction and training details
are in Sec.~\ref{app:reward-training}, while fixed-step sampling details are in
Sec.~\ref{app:formal-setup}.

\subsection{Challenge 2: Scoring Corrupted Sequences without Naive Enumeration}
\label{sec:reward-compiler}

In the previous section we have shown that constructing and then sampling from the joint distribution can be tractable,
and we now try to incorporate the sequence level objective into its continuation calculation.
To draw samples from Eq.~\ref{eq:guided-law}, we should make sure that the $W_C$ won't break the computation complexity,
and this is what Challenge~2 is about.
Using $W_C^{\mathrm{oracle}}(x_0)$ to denote the desired weight, with no
assumed structure,
our goal is to seek a structured surrogate scorer $W_C$ for the same task
signal,
whose structure is compatible with the DP computation structure to preserve the complexity.
The process of finding such surrogate $W_C$ is called \emph{compilation}, and usually, we just need to find the proper structures we need,
and fit the model with the same objectives or distill from a better one.
With Prop.~\ref{prop:compilation-approximation}, with the surrogate structure that will be specified later,
the approximation error of such compilation can be analysed and controlled. Details are specified in Sec.~\ref{app:compilation-approximation}.
\begin{proposition}[Objective approximation under compilation]
\label{prop:compilation-approximation}
Fix $x_t$ and a base distribution $q_0$ with finite support, on which $W_C$ and
$W_C^{\mathrm{oracle}}$ are finite and strictly positive. Exact compilation preserves the guided
law $q_C$. Let $q_C^{\mathrm{oracle}}$ use the same $q_0$ with the oracle weight.
If $|\log W_C(x_0)-\log W_C^{\mathrm{oracle}}(x_0)-c|\leq\epsilon$
throughout this support for some constant $c$ and $\epsilon\geq0$, then
\[
D_{\mathrm{KL}}(q_C^{\mathrm{oracle}}\Vert q_C)\leq\epsilon^2/2.
\]
\end{proposition}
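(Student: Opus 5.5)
The plan is to reduce the statement to a single cumulant-generating-function bound under $q_C^{\mathrm{oracle}}$, and then apply Hoeffding's lemma.

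The first claim, that exact compilation preserves the guided law, is immediate from Eq.~\ref{eq:guided-law}. If $W_C=W_C^{\mathrm{oracle}}$ on the support, or more generally $W_C=e^{c}W_C^{\mathrm{oracle}}$, then numerator and normalizer $Z_C(x_t)$ are multiplied by the same constant. So $q_C=q_C^{\mathrm{oracle}}$. This is the case $\epsilon=0$ of the bound below.

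For the quantitative claim, I set $f(x_0)=\log W_C(x_0)-\log W_C^{\mathrm{oracle}}(x_0)-c$ on the (finite) support of $q_0$, so that $|f|\le\epsilon$ and $W_C=e^{c}e^{f}W_C^{\mathrm{oracle}}$. Substituting into Eq.~\ref{eq:guided-law}, the constant $e^{c}$ cancels. Dividing by $Z_C^{\mathrm{oracle}}(x_t)$ gives
\[
q_C(x_0\mid x_t)=\frac{q_C^{\mathrm{oracle}}(x_0\mid x_t)\,e^{f(x_0)}}{\mathbb E_{q_C^{\mathrm{oracle}}}[e^{f}]}.
\]
All quantities are finite and strictly positive by hypothesis, so the two laws share the same support and the logarithms are well defined. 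Therefore
\[
D_{\mathrm{KL}}(q_C^{\mathrm{oracle}}\Vert q_C)=\mathbb E_{q_C^{\mathrm{oracle}}}\bigl[\log(q_C^{\mathrm{oracle}}/q_C)\bigr]=\log\mathbb E_{q_C^{\mathrm{oracle}}}[e^{f}]-\mathbb E_{q_C^{\mathrm{oracle}}}[f].
\]
This is exactly the centered log-moment-generating function of $f$ at parameter $1$ under $q_C^{\mathrm{oracle}}$.

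The remaining step, and the only one with content, is to bound this quantity. I would invoke Hoeffding's lemma: a random variable taking values in an interval of length $b-a$ satisfies $\log\mathbb E[e^{s(f-\mathbb E f)}]\le s^2(b-a)^2/8$. Here $f\in[-\epsilon,\epsilon]$, so $b-a=2\epsilon$, and with $s=1$ we obtain $\epsilon^2/2$, as claimed.

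For self-containment I would sketch the standard proof of Hoeffding's lemma. Let $\psi(s)=\log\mathbb E[e^{sf}]$. Then $\psi(0)=0$ and $\psi'(0)=\mathbb E f$. Moreover, $\psi''(s)$ is the variance of $f$ under the tilted law $\propto e^{sf}q_C^{\mathrm{oracle}}$, which is at most $(2\epsilon)^2/4=\epsilon^2$ because $f$ lives in an interval of length $2\epsilon$. Taylor's theorem then gives $\psi(1)-\psi'(0)\le\epsilon^2/2$.

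I expect no real obstacle. The only point needing care is getting the direction of the KL right, so that the expectation is taken under $q_C^{\mathrm{oracle}}$ and the log-partition term appears with a plus sign. One should also note that the shift $c$ is harmless precisely because it cancels in the normalization.
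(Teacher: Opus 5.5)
Your proof is correct and is essentially the paper's argument. The paper also writes the divergence as $A(1)-A(0)-A'(0)$ for the log-moment-generating function $A(s)$ of $d=\log W_C-\log W_C^{\mathrm{oracle}}$ under $q_C^{\mathrm{oracle}}$, and integrates the Taylor remainder to get $\epsilon^2/2$. It bounds the tilted variance directly by $A''(s)\le\mathbb E_{q_s}[(d-c)^2]\le\epsilon^2$ rather than invoking Hoeffding's lemma or Popoviciu's inequality.

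One small reading difference: in the paper, ``exact compilation'' means that the automaton's local and terminal factors multiply to a positive constant times the fitted $W_C$, not that $W_C$ matches the oracle. Your constant-cancellation argument covers that case verbatim.
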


To see the required structure,
we can start with the simplest scorer class,
$W_C^{\mathrm{local}}(x_0)=\prod_i w_C(x_0^i)$, where $w_C(u)\geq0$.
Because each factor concerns only the current token, absorbing it into the
carrier transition gives the same backward structure as
Equation~\ref{eq:carrier-recurrence}, now incorporating the objective weights,
\begin{align}
\overline K_{C,i}(h,u,h')&=K_i(h,u,h')w_C(u),
\label{eq:local-weight-composition}\\
B_{i-1}^{\mathrm{local}}(h)&=\sum_{u\in\mathcal V}\sum_{h'\in\mathcal H}
e_{t,i}(u)\overline K_{C,i}(h,u,h')B_i^{\mathrm{local}}(h').
\label{eq:local-weight-backward}
\end{align}
With $B_L^{\mathrm{local}}(h)=1$, this computes the guided completion weights
and hence the exact conditionals. Each transition only adds one multiplication,
which preserves the dense carrier complexity
of $O(L|\mathcal V|H^2)$, preserving the same time complexity as to sample from the neutral distribution.
However, such scorer cannot
express preferences for structures formed by multiple tokens, for example,
since
$W_C^{\mathrm{local}}(\texttt{AB})=w_C(\texttt A)w_C(\texttt B)
=W_C^{\mathrm{local}}(\texttt{BA})$, it can't even tell orders apart.
More generally, an objective may assign
scores to occurrences of multiple patterns with different lengths, including
repeated and overlapping matches, and they cannot be modeled by such simple class.

Now, our goal is to capture complex patterns while retaining the same DP
structure. To do so, we can try to generalize
Equations~\ref{eq:local-weight-composition} and~\ref{eq:local-weight-backward}.
Notice that the recurrence only requires each objective factor to be
computable from the current state and token
and the scorer shall have the same transition dynamics as the carrier graphical model,
if a finite state $a_i$ can
remember the relevant pattern history, we could therefore extend
$W_C(x_0)$ to $\prod_{i=1}^{L}w_C(a_i)$ with a chain-structured update
$a_i=\delta_{C,i}(a_{i-1},x_0^i)$.
Such an extension enlarges the state space from $h_i$ to $(h_i,a_i)$,
so the same recurrence can operate on the paired states.
With $A$ denoting the number of $a_i$, the overall time complexity will be $\mathcal{O}(LA | V |H^2 )$,
preserving the polynomial time complexity.

For a selected pattern set $\mathcal G_C$, an Aho-Corasick (AC)
automaton~\citep{aho1975efficient} has all the nice desired properties.
Its states
$\mathcal A$ correspond to shared pattern prefixes, with the empty prefix as
$a_{\mathrm{init}}$. After reading $x_0^{1:i}$, the state $a_i$ records the
longest suffix that is also a prefix of a selected pattern. Reading a token
extends the current match or falls back to a shorter suffix, giving the
update $a_i=\delta_{C,i}(a_{i-1},x_0^i)$ from $a_0=a_{\mathrm{init}}$.
For simplicity,
here we define $\delta_{C,i}=\delta_C$ to be position-independent.
Such an automaton can report all selected patterns completed when
reading $u$ from $a$, including overlapping occurrences. For example,
\texttt{New} starts a partial match and the next token \texttt{York}
completes \texttt{New York} (Figure~\ref{fig:fig1}). The transition and
output construction is detailed in Sec.~\ref{app:formal-setup}.

The token-only factor $w_C(u)$ in
Equation~\ref{eq:local-weight-composition} can now be replaced by a
nonnegative local weight $\psi_{C,i}(a,u)$.
Since both models consume the same token, their joint transition
and its weight become
\begin{equation}
(h,a)\xrightarrow{u}\bigl(h',\delta_{C,i}(a,u)\bigr),\qquad
\overline K_{C,i}(h,a,u,h')=K_i(h,u,h')\psi_{C,i}(a,u).
\label{eq:state-weight-composition}
\end{equation}
With the modified transition dynamics,
we can therefore replace $B_i^{\mathrm{local}}(h)$ in
Eq.~\ref{eq:local-weight-backward} by $M_i(h,a)$,
the total continuation
weight from the state pair, and obtain the similar recurrence relationships
with nonnegative terminal factor $\eta_C(a)$
(visualization in Fig.~\ref{fig:method-overview})
\begin{align}
M_L(h,a)&=\eta_C(a),
\label{eq:backward-terminal}\\
M_{i-1}(h,a)&=\sum_{u\in\mathcal V}\sum_{h'\in\mathcal H}
e_{t,i}(u)\overline K_{C,i}(h,a,u,h')
M_i\!\left(h',\delta_{C,i}(a,u)\right).
\label{eq:product-recurrence}
\end{align}
The backward recurrence gives \(M_0(h_0,a_{\mathrm{init}})\), which normalizes the guided joint distribution,
\begin{equation}
q_C^{\mathrm{joint}}(x_0,h_{1:L}\mid x_t)
\quad=\frac{\eta_C(a_L)}{M_0(h_0,a_{\mathrm{init}})}
\prod_{i=1}^{L}e_{t,i}(x_0^i)
\overline K_{C,i}(h_{i-1},a_{i-1},x_0^i,h_i).
\label{eq:product-law}
\end{equation}
Since the tokens uniquely determine the objective-state path,
multiplying its local objective factors gives the sequence-level preference
\begin{equation}
W_C(x_0)=\eta_C(a_L)\prod_{i=1}^{L}
\psi_{C,i}(a_{i-1},x_0^i),
\label{eq:objective-interface}
\end{equation}
and complex patterns are captured with AC state transitions.
With Eq.~\ref{eq:product-law} and \ref{eq:objective-interface}, we can see that the standalone scorer assigns weights to complete sequences, while the joint inference DP process averages these weights over compatible completions and eventually changes the neutral unguided distribution to a more favored one.
In our applications, we just need to train the local factors $\psi(a,u)$ to obtain the compiled objective (Sec.~\ref{app:reward-training}).
The same interface also supports hard conditioning using an
appropriate deterministic objective automaton with unit transition weights and
$\eta_C(a_L)=\mathbf 1[a_L\in\mathcal A_{\mathrm{accept}}]$ to exclude rejecting
paths without requiring a learned scorer.

\subsection{\method{}: Guided Reconstruction}
\label{sec:backward-dp}

We now connect the continuation weights to the expectation in Eq.~\ref{eq:guided-continuation-expectation}.
As discussed previously, using the same diffusion backbone and carrier,
starting from the state $(h,a)$ at the $i$-th token,
$B_i(h)$ is the accumulated probability mass at $h$ state if we run DP without scorer,
while
$M_i(h,a)$ is the total mass at the joint state $(h,a)$, with scorer weights added up to the mass.
Directly taking their ratio could give us
the conditional expectation (details in Eq.~\ref{eq:remaining-objective-expectation})
and get
\begin{equation}
\frac{M_i(h,a)}{B_i(h)}
=\mathbb E_{q_0}\!\left[
\eta_C(a_L)\prod_{j=i+1}^{L}\psi_{C,j}(a_{j-1},x_0^j)
\,\middle|\,h_i=h,x_t;\ a_i:=a\right].
\label{eq:guided-continuation-expectation}
\end{equation}
Based on that, following the same
marginalization as in Sec.~\ref{sec:carrier}, let $a'=\delta_{C,i}(a,u)$.
For a current state with $M_{i-1}(h,a)>0$, the sampling conditional is proportional to
\begin{equation}
\begin{aligned}
&q_C^{\mathrm{joint}}(x_0^i=u,h_i=h'\mid h_{i-1}=h,a_{i-1}=a,x_t)\\
&\quad\propto
\underbrace{e_{t,i}(u)}_{\substack{\text{base denoiser}\\\text{marginal}}}
\quad\underbrace{K_i(h,u,h')B_i(h')}_{\substack{\text{carrier structure}\\\text{and base continuation}}}
\quad\underbrace{\psi_{C,i}(a,u)}_{\substack{\text{current AC}\\\text{objective weight}}}
\quad\underbrace{\frac{M_i(h',a')}{B_i(h')}}_{\substack{\text{expected remaining}\\\text{objective weight}}},
\end{aligned}
\label{eq:forward-sampling}
\end{equation}
which holds for fixed $(h,a)$ on base-supported choices.
Note that $B_i$ can be canceled and hence doesn't need to be computed during sampling.
Using such decomposition, we can further analyze the \texttt{New York} example:
even when
\texttt{New} itself has no reward,
it can be favored through completions in which \texttt{New York} is formed.
Through the carrier’s dependencies, the same preference can also affect positions not directly scored by the objective,
and therefore, result in a higher preference in \texttt{USA}.

To draw samples from the joint distribution, \method{} computes all the $M_i$ values
and, following Eq.~\ref{eq:carrier-conditional}, starts at
$(h_0,a_{\mathrm{init}})$ and samples from $i=1$ to $L$ using
\begin{equation}
q_C^{\mathrm{joint}}(x_0^i=u,h_i=h'\mid h_{i-1}=h,a_{i-1}=a,x_t)
\quad=\frac{e_{t,i}(u)\overline K_{C,i}(h,a,u,h')M_i(h',a')}
{M_{i-1}(h,a)},
\label{eq:guided-sampling-normalized}
\end{equation}
where $\overline K_{C,i}$ is defined in Eq.~\ref{eq:state-weight-composition}
and the denominator sums the numerator over all choices by
Eq.~\ref{eq:product-recurrence}. After drawing $(u,h')$, set
$\widetilde x_0^i=u$ and update the state to $(h',a')$.
Once a complete
reconstruction is obtained, we pass it to the unchanged diffusion update rule
with original noise scheduling $q_{host}$
\begin{equation}
\widetilde x_0\sim q_C(\,\cdot\mid x_t),\qquad
x_{t-1}\sim q_{\mathrm{host}}(\,\cdot\mid x_t,\widetilde x_0),
\label{eq:host-integration}
\end{equation}
and repeat the algorithm in next diffusion steps to eventually obtain clean samples
(Algorithm~\ref{alg:coffee-update}).

\label{sec:extension}
Beyond guided sampling, the finite-state compilation and DP structure
allow \method{} to draw on the extensive algorithmic literature on weighted
automata~\citep{mohri2009weighted}, opening a route to richer queries that
combine optimization with probabilistic inference. Our Dyck repair task
illustrates this potential: we first use a
bounded-stack state graph to retain all valid, prefix-preserving paths with
the fewest token changes within the declared support. This optimal set then
acts as a hard objective: on the retained graph, the same continuation-weight
recurrence and sampling rule combine denoiser evidence with carrier dependence.
Thus, optimization determines the conditioning event, and our joint inference
samples among equally optimal repairs (Sec.~\ref{sec:symbolic-results}).

\begin{table}[h]
\centering
\fontsize{9.0}{9.6}\selectfont
\setlength{\tabcolsep}{0.65pt}
\renewcommand{\arraystretch}{0.88}
\setlength{\aboverulesep}{0pt}
\setlength{\belowrulesep}{0.15ex}
\caption{
    This table shows the generation results under full diffusion budgets.
The warm
\textcolor{DreamHeader}{Dream} and blue \textcolor{LLaDAHeader}{LLaDA} headers denote the two backbones
and the black headers denote biological tasks. \method{} surpasses almost all the baselines on both control and quality metrics over different benchmarks.
}
\label{tab:main-results}
\label{tab:main-symbolic-language}
\label{tab:main-biology-compact}
\label{tab:biology}
\label{tab:protein-budgets}
\resizebox{\textwidth}{!}{%
\begin{tabular}{@{}r||*{8}{c}||*{8}{c}||*{8}{c}@{}}
\toprule
\rowcolor{HeaderFill}
 & \multicolumn{2}{c}{\tiny\textcolor{DreamHeader}{\textbf{Dyck repair}}} & \multicolumn{2}{c}{\tiny\textcolor{DreamHeader}{\textbf{Sudoku}}} & \multicolumn{2}{c}{\tiny\textcolor{DreamHeader}{\textbf{CommonGen}}} & \multicolumn{2}{c||}{\tiny\textcolor{DreamHeader}{\textbf{RTP}}} & \multicolumn{2}{c}{\tiny\textcolor{LLaDAHeader}{\textbf{Dyck repair}}} & \multicolumn{2}{c}{\tiny\textcolor{LLaDAHeader}{\textbf{Sudoku}}} & \multicolumn{2}{c}{\tiny\textcolor{LLaDAHeader}{\textbf{CommonGen}}} & \multicolumn{2}{c||}{\tiny\textcolor{LLaDAHeader}{\textbf{RTP}}} & \multicolumn{2}{c}{\tiny\textcolor{black}{\textbf{DeepSTARR}}} & \multicolumn{2}{c}{\tiny\textcolor{black}{\textbf{APARENT}}} & \multicolumn{2}{c}{\tiny\textcolor{black}{\textbf{K562}}} & \multicolumn{2}{c}{\tiny\textcolor{black}{\textbf{Protein}}}\\[-0.45ex]
\rowcolor{HeaderFill}
\raisebox{0.55ex}[0pt][0pt]{\tiny\textbf{Method}} & {\fontsize{4.5}{4.8}\selectfont Control$\uparrow$} & {\fontsize{4.5}{4.8}\selectfont Quality$\downarrow$} & {\fontsize{4.5}{4.8}\selectfont Control$\uparrow$} & {\fontsize{4.5}{4.8}\selectfont Quality$\downarrow$} & {\fontsize{4.5}{4.8}\selectfont Control$\uparrow$} & {\fontsize{4.5}{4.8}\selectfont Quality$\downarrow$} & {\fontsize{4.5}{4.8}\selectfont Control$\uparrow$} & {\fontsize{4.5}{4.8}\selectfont Quality$\downarrow$} & {\fontsize{4.5}{4.8}\selectfont Control$\uparrow$} & {\fontsize{4.5}{4.8}\selectfont Quality$\downarrow$} & {\fontsize{4.5}{4.8}\selectfont Control$\uparrow$} & {\fontsize{4.5}{4.8}\selectfont Quality$\downarrow$} & {\fontsize{4.5}{4.8}\selectfont Control$\uparrow$} & {\fontsize{4.5}{4.8}\selectfont Quality$\downarrow$} & {\fontsize{4.5}{4.8}\selectfont Control$\uparrow$} & {\fontsize{4.5}{4.8}\selectfont Quality$\downarrow$} & {\fontsize{4.5}{4.8}\selectfont Control$\uparrow$} & {\fontsize{4.5}{4.8}\selectfont Quality$\downarrow$} & {\fontsize{4.5}{4.8}\selectfont Control$\uparrow$} & {\fontsize{4.5}{4.8}\selectfont Quality$\downarrow$} & {\fontsize{4.5}{4.8}\selectfont Control$\uparrow$} & {\fontsize{4.5}{4.8}\selectfont Quality$\downarrow$} & {\fontsize{4.5}{4.8}\selectfont Control$\uparrow$} & {\fontsize{4.5}{4.8}\selectfont Quality$\downarrow$}\\[-0.25ex]
\midrule
Base & 3.0 & 13.5 & 60.4 & 0.1 & 38.6 & 10.1 & 53.5 & 5.2 & 7.5 & 14.0 & 20.4 & 0.3 & 40.9 & \textbf{14.4} & 43.3 & 5.3 & -0.4 & 728.3 & 2.3 & 728.3 & 44.3 & 728.3 & 11.0 & 1.7\\
\addlinespace[0.4pt]
CDD & \underline{46.9} & 14.8 & 99.3 & 0.0 & 46.1 & 30.4 & 55.7 & 17.2 & \underline{62.5} & 14.7 & 90.7 & 0.0 & 40.9 & 18.8 & \underline{81.7} & 13.1 & 3.5 & \underline{1.2} & 7.9 & 682.1 & 83.7 & 70.1 & 10.5 & 1.7\\
\addlinespace[0.4pt]
CDM & 0.2 & \underline{11.7} & \textbf{100.0} & \textbf{0.0} & \underline{57.2} & 33.8 & 65.6 & 5.4 & 7.7 & 13.8 & \textbf{100.0} & \textbf{0.0} & 41.5 & \underline{14.5} & 54.3 & \textbf{4.9} & 0.6 & 429.1 & 10.1 & 691.9 & 72.4 & 250.9 & 10.2 & 1.7\\
\addlinespace[0.4pt]
D-CBG & 14.9 & 14.3 & \underline{99.8} & \underline{0.0} & 39.4 & 10.4 & 62.2 & \underline{5.0} & 23.1 & 14.5 & \underline{98.7} & 0.0 & 30.7 & 32.7 & 72.4 & 7.1 & -0.5 & 1799.7 & 2.2 & 1492.4 & \textbf{100.0} & \underline{1.2} & 11.0 & 1.7\\
\addlinespace[0.4pt]
DG-TAG & \textbf{100.0} & 14.9 & \textbf{100.0} & \textbf{0.0} & 29.7 & \underline{9.8} & \textbf{87.9} & 7.2 & \textbf{100.0} & 14.9 & \textbf{100.0} & \textbf{0.0} & 32.9 & 14.7 & 66.9 & 5.4 & \underline{3.6} & 2.6 & \textbf{24.3} & \textbf{6.3} & 97.2 & 3.4 & \underline{14.5} & \underline{1.5}\\
\addlinespace[0.4pt]
GILC-DB & 6.3 & 13.5 & 60.5 & 0.1 & 29.6 & \textbf{7.5} & 42.6 & \textbf{4.8} & 22.5 & 14.1 & 23.2 & 0.4 & \underline{41.8} & 16.1 & 67.8 & \underline{4.9} & -0.1 & 448.9 & 2.5 & 735.6 & 48.3 & 618.3 & 11.0 & 1.7\\
\addlinespace[0.4pt]
MDM-VGB & 4.5 & 13.8 & 93.1 & \textbf{0.0} & 39.7 & 10.0 & 62.4 & \underline{5.0} & 25.5 & \textbf{3.0} & 91.8 & \underline{0.0} & 41.0 & \textbf{14.4} & 72.9 & 8.9 & 1.6 & 56.6 & 4.3 & 630.4 & 84.7 & 55.5 & 10.0 & 1.7\\
\addlinespace[0.4pt]
\specialrule{0.55pt}{0.25pt}{0.12pt}
\specialrule{0.25pt}{0pt}{0.80pt}
\rowcolor{oursrow}
\method{} & \textbf{100.0} & \textbf{7.8} & \textbf{100.0} & \textbf{0.0} & \textbf{100.0} & 10.2 & \underline{77.6} & 6.0 & \textbf{100.0} & \underline{7.8} & \textbf{100.0} & \textbf{0.0} & \textbf{100.0} & 20.3 & \textbf{91.6} & 7.2 & \textbf{5.3} & \textbf{1.1} & \underline{21.5} & \underline{16.7} & \underline{99.8} & \textbf{1.1} & \textbf{28.2} & \textbf{1.4}\\
\bottomrule
\end{tabular}
}
\end{table}

\section{Experiments}
\label{sec:experiments}

\subsection{Main Results}
\label{sec:endpoint-results}

We evaluate \method{} with frozen Dream and LLaDA backbones on symbolic and
language tasks, D3LM on DNA, and EvoDiff on protein design, covering exact
constraints and learned objectives.
Dyck repair and Sudoku measure validity alongside edit distance or constraint
violations. CommonGen~\citep{lin-etal-2020-commongen} and
RealToxicityPrompts (RTP)~\citep{gehman-etal-2020-realtoxicityprompts} assess
lexical coverage and non-toxicity alongside perplexity (PPL).
DeepSTARR~\citep{dealmeida2022deepstarr}, APARENT~\citep{bogard2019aparent}, and
Malinois~\citep{gosai2024malinois} use predictor-defined DNA objectives, with
native leave-one-out pseudo-perplexity (LOO) and diversity as diagnostics.
Protein-1YCR follows EvoDiff's motif-scaffolding setup~\citep{alamdari2023evodiff}
and evaluates structural recovery using OmegaFold~\citep{wu2022omegafold}.
Table~\ref{tab:main-results} reports full-budget results;
Fig.~\ref{fig:budget-primary} examines whether target attainment is preserved
with fewer denoising updates. 
Details are 
in Secs.~\ref{app:data-evaluation}--\ref{app:baselines}.

\paragraph{Symbolic tasks.}
\label{sec:symbolic-results}
On Dyck, \method{} and DG-TAG both reach full validity, but \method{} requires
fewer edits. Whereas DG-TAG rescores a bounded set of successors, \method{}'s
min-plus pass first retains the minimum-substitution feasible set within the
declared support. Denoiser evidence and carrier weights then select among
equally optimal repairs, without trading additional edits for a higher
generation score. The Sudoku budget sweep tests a different issue: individually
feasible digits can still be mutually incompatible.
A joint reconstruction supplies a common feasible completion for all of these
choices. 
CDM also maintains full validity across
budgets when its particles operate within exact feasible-completion support,
despite its low validity on Dyck. Thus, the symbolic comparison concerns both
the construction of the feasible set and how parallel choices are coupled
within it.

\paragraph{Language tasks.}
\label{sec:language-results}
On CommonGen, \method{} achieves full coverage across both backbones even
under limited budgets in Fig.~\ref{fig:budget-primary}. 
The compiled
terminal condition excludes completions that omit a requested concept,
while the objective state tracks concepts already covered, so the
remaining completion calculation coordinates the placement of those still
missing. 
Among accepted realizations, word order and surrounding
text remain weighted by the denoiser and carrier. 
On the RTP task, \method{} preserves good PPL with high non-toxic rates over
different budgets and backbones, showing the benefits of joint inference coupled with sequence-level objectives.

\paragraph{Biological sequence design.}
\label{sec:biology-results}
On biological sequences, \method{} improves control metrics with low trade-off 
of sequence quality.
On Protein-1YCR, higher structural success is
accompanied by lower mean motif RMSD. For DNA, low native LOO can coexist
with severe concentration of the outputs. 
K562 combines high target attainment
with few distinct sequences, while DeepSTARR collapses entirely to a single
all-T sequence despite favorable activity and LOO scores. 
The diversity diagnostic exposes the
concentration that neither of these two averages measures.

\begin{figure}[!thbp]
  \centering
  \includegraphics[width=\textwidth]{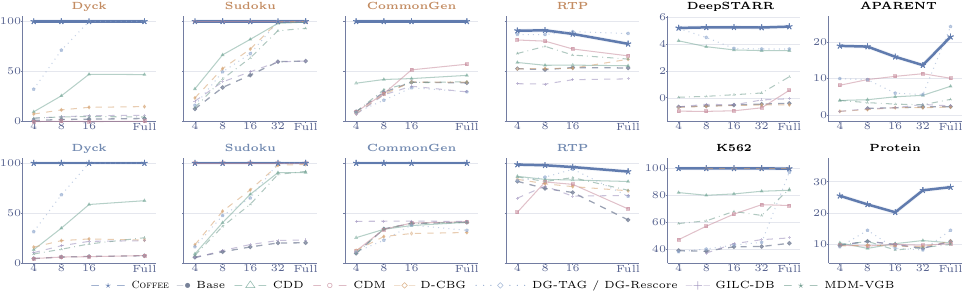}
  \vspace{-0.55em}
  \caption{
\textbf{Control across denoising budgets $b$ (higher is better).}
Warm \textcolor{DreamHeader}{Dream} and blue \textcolor{LLaDAHeader}{LLaDA}
titles distinguish the language backbones; black titles denote biological
tasks. Hollow markers indicate archival or qualified settings documented in
Sec.~\ref{app:complete-results}.
The results show that \method{} achieves the highest control effects over almost all
the benchmarks, even with only a few diffusion steps.
}
  \label{fig:budget-primary}
\end{figure}

\subsection{Mechanistic Analysis}

In the HepG2 rollouts, objective weighting changes the evolution of complete
clean proposals, yielding more favored proposals at the final displayed
update (Fig.~\ref{fig:guidance-mechanism}A). The intervention acts through guided
reconstruction.
After the host commits tokens, the next reconstruction uses refreshed
denoiser evidence and respects those observations. Guidance therefore
repeatedly reweights completions compatible with the evolving partial sequence.
Eq.~\ref{eq:forward-sampling} combines token-level transition factors with
an aggregate weight over the remaining completions. The continuation ratio
in Eq.~\ref{eq:guided-continuation-expectation} averages downstream objective
weight under the neutral model for each current choice. This allows future
objective factors to affect the current token conditional.
The separate
K562 diagnostic exposes this dependence even when downstream token identities
are already fixed: their objective contributions still depend on the state
reached by the current choice. Clamping their evidence does not make those
objective factors common constants that cancel between current candidates.
These state-dependent corrections are substantial at a small subset of the
queried positions (Fig.~\ref{fig:guidance-mechanism}B-C).
Meanwhile, to test completion weighting and joint sampling in complete generations, we
run separate controls over CommonGen. Full lowers PPL
relative to feasibility-only guidance, while independent guided marginals
reduce lexical coverage from 100\% to 83.98\%, showing why joint sampling
matters (Sec.~\ref{app:commongen-components}).
We have also evaluated the empirical inference speed with the same configuration in Table~\ref{tab:main-results}, and shown that
without much efforts in designing graphical model specific operators,
\method{}'s inference speed surpasses most of the baselines while obtaining high control as well as generation quality, indicating the benefits of our plug-and-play scheme. Details are in Sec.~\ref{app:full-generation-cost}.

\begin{figure}[!t]
  \centering
  \includegraphics[width=\textwidth]{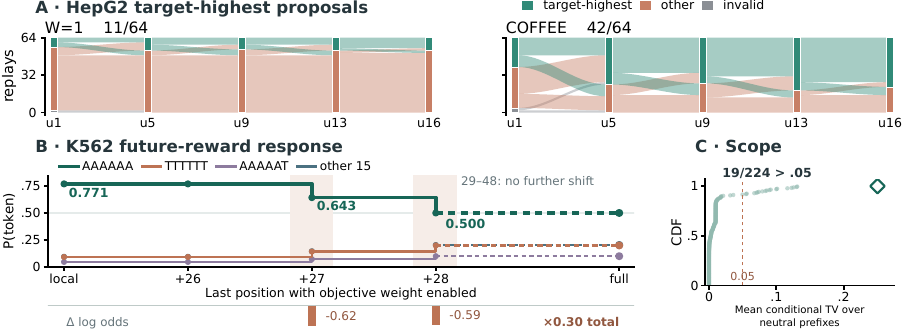}
  \caption{
\textbf{Proposal trajectories and token-level guidance.}
(A) Evaluator-category transitions of complete clean proposals from 64 same-ID
HepG2 rollouts per arm, with the objective disabled ($W_C\equiv1$) or enabled
and the identity carrier fixed. Ribbon widths count trajectories, not
probability mass.
(B) For a fixed K562 query and prefix at position 25, curves show token
probabilities as future objective factors are enabled cumulatively;
the state graph and all other factors remain fixed.
Local omits future weights; Full includes all. Weights at two already observed
downstream tokens reduce the probability of \texttt{AAAAAA}.
Bars show incremental log odds of \texttt{AAAAAA} against the other candidates.
The horizontal axis is objective position, not diffusion time.
(C) CDF of Full-Local total variation over 224 masked positions in nine queries,
with conditional TV averaged over prefixes under neutral occupancy.
The diamond marks the maximum-TV position selected for (B).
Protocols are in Sec.~\ref{app:guidance-diagnostics}.
  }
  \label{fig:guidance-mechanism}
\end{figure}

\subsection{Discussion and Related Work\protect\footnotemark}
\label{sec:discussion}
\footnotetext{A more detailed literature review is in Sec.~\ref{sec:related}.}

The fixed-query analysis in Figure~\ref{fig:guidance-mechanism} shows how
future objective factors can affect a current choice without changing the
denoiser evidence,
which connects \method{} to the probabilistic-control view of
generation~\citep{levine2018inference}, where preferences over complete
outcomes inform intermediate decisions.
Future discriminators, soft-value decoding, and twisted sequential
Monte Carlo obtain such signals through prediction or
sampling~\citep{yang2021fudge,li2025svdd,zhao2024twisted},
while GeLaTo, Ctrl-G, and TRACE use tractable completion
models~\citep{zhang2023gelato,zhang2024ctrlg,weng2025trace}.
\method{} combines this completion-based view with the dependence
modeling of CoDD~\citep{li2026codd}, and within each diffusion step, it
aggregates objective contributions over joint reconstructions rather
than treating unresolved choices independently to model cross-token dependency.

The compiled representation also supports requirements that are more
naturally expressed as inference queries than as a single predictive score, as discussed in Sec.~\ref{sec:extension}.
Our Dyck repair construction first identifies all minimum-edit valid
paths within the declared support, then samples among them using
denoiser evidence and carrier weights.
This combines optimization with probabilistic conditioning, drawing on
weighted-automata algorithms~\citep{mohri2009weighted},
connecting to neuro-symbolic structured prediction, where neural
evidence is combined with explicit
constraints~\citep{ahmed2022spl,vanKrieken2025neurosymbolic}.
Compatible objective states could similarly combine hard requirements
and soft preferences in one reconstruction query.
The benefit is therefore not only a change in the scoring function,
but access to additional more complex algorithmic computations during generation.

These capabilities depend on finding representations that remain both
useful and tractable when combined.
Efficient evaluation of the model and scorer separately does not ensure
an efficient expectation query~\citep{khosravi2019expected}, and research in
knowledge compilation and circuit operations makes representation size
and structural compatibility central to this
question~\citep{darwiche2002knowledge,vergari2021atlas}.
Our selected token and phrase features omit interactions outside the
scorer class, while richer representations can make the product state
too large.
Proposition~\ref{prop:compilation-approximation} bounds objective
approximation for a fixed carrier and support,
but more theories need to be developed to
extend the expressiveness of such representation, and to
analyze the diffusion errors alongside the generation process.
Circuit restructuring and structured-sparse parameterizations offer
related directions for compatible composition and hardware-efficient
inference~\citep{zhang2025restructuring,zhang2025monarch}.

\section{Conclusion}
\label{sec:conclusion}

We introduced \method{}, a plug-and-play framework that turns
sequence-level objectives into joint reconstruction queries for
discrete diffusion.
A target-free carrier and compiled objective allow the sampler to
coordinate unresolved tokens and account for their possible
completions without explicit enumeration.
Under the stated representation and support conditions, fixed-step
inference is exact for the compiled model and leaves the pretrained
generator unchanged.
The same interface supports hard conditioning, finite-state soft
preferences, and optimization followed by conditional sampling.
Experiments show full satisfaction on the evaluated symbolic and
lexical tasks and success
in reward-guided biological sequence generation.
More broadly, the representation of a guidance objective determines
not only which sequences it rewards, but also which queries can be
answered during generation.
Making objectives available to conditioning, marginalization, and
optimization enables control over sets of possible completions,
rather than requiring those completions to be generated and scored
individually, revealing broader potentials for neural-symbolic methods.

\section*{AI Use Statement}

Generative AI tools were used for literature search and synthesis, feedback on
research methodology and experiment design, code implementation and debugging,
analysis and interpretation of results, manuscript organization and revision,
and preparation of LaTeX tables and figures. They were not used to fabricate or
alter experimental measurements, to choose held-out results after observing
their outcomes, or to make final scientific decisions. The authors checked
AI-assisted citations against the original sources, reviewed and tested
AI-assisted code, and reconciled reported values and claims with the frozen
experimental artifacts and evaluator outputs. The authors made all final
scientific and writing decisions and take responsibility for the complete
contents of this preprint.

\section*{Ethics Statement}

This computational study recruited no human participants, collected no new
personal data, and performed no wet-lab biological experiments. The language
evaluation includes potentially harmful text from an established toxicity
benchmark, which is used only for computational evaluation. The proposed
framework is dual-use because it can steer language or biological sequences
toward any objective that admits a compatible representation. Although the
evaluated objectives concern non-toxic language and established biological
design benchmarks, predictor scores do not establish biological function,
safety, or clinical utility. We therefore restrict our claims to the declared
evaluators and supports, report failure modes such as mode collapse, and make
no claim of experimentally validated biological efficacy.

\section*{Reproducibility Statement}
The \hyperref[app:guide]{appendix guide} summarizes the reproducibility path.
The method section and Appendices~\hyperref[app:formal-setup]{A} to
\hyperref[app:reward-training]{C} specify the fixed-step law, exactness and
complexity boundaries, and construction of the carrier and objectives.
Appendices~\hyperref[app:data-evaluation]{D} to
\hyperref[app:experimental-protocol]{E} document the data and evaluator contracts,
baseline adaptations, development-only selection rules, and frozen configurations.
Tab.~\ref{tab:main-results} and Fig.~\ref{fig:budget-primary} contain the
claim-facing results, while Appendix~\hyperref[app:complete-results]{F}
documents their qualifications. Appendices~\hyperref[app:algorithm]{G} to
\hyperref[app:reproducibility]{I} provide the canonical algorithms, failure
semantics, diagnostic protocols, and evidence provenance used to audit the
reported claims.

\bibliographystyle{iclr2027_conference}
\bibliography{references}

\appendix
\section*{Guide to the Appendix}
\label{app:guide}

This appendix follows the objects and claims introduced in the main paper.
Secs.~\hyperref[app:formal-setup]{A} to \hyperref[app:reward-training]{C}
give the formal derivations, computational boundary, and construction of the
carrier and objectives. Secs.~\hyperref[app:data-evaluation]{D} to
\hyperref[app:complete-results]{F} specify the evaluation contracts, comparison
rules, frozen configurations, and qualifications needed for the main results. The final three
sections provide canonical algorithms, process diagnostics, and evidence
provenance. Current benchmark endpoints are kept separate from DEV or
TRAIN-derived diagnostics and from historical cohorts.

\paragraph{Reader conventions.}
TRAIN, DEV, and TEST denote fitting, configuration selection, and final
evaluation roles. A \emph{frozen configuration} has its model, objective,
hyperparameters, prompt or structured input, and evaluator fixed before final
evaluation. The \emph{host} is the unchanged pretrained diffusion backbone and
its original update rule. A \emph{port} is our implementation of a published
algorithmic idea on that host and does not imply author-code identity. NFE denotes
the realized number of denoiser row-forward evaluations. Full denotes the
task-native full budget used in Tab.~\ref{tab:main-results}; it does not imply
equal total compute across methods.

\begin{center}
\small
\renewcommand{\arraystretch}{1.08}
\begin{tabularx}{\linewidth}{@{}p{0.08\linewidth}Xr@{}}
\toprule
Section & Contents & Page \\
\midrule
\hyperref[app:formal-setup]{A} & Additional method details and guarantees & \pageref*{app:formal-setup} \\
\hyperref[app:complexity]{B} & Implementation boundary and computational cost & \pageref*{app:complexity} \\
\hyperref[app:reward-training]{C} & Carrier and objective construction & \pageref*{app:reward-training} \\
\hyperref[app:data-evaluation]{D} & Experimental setup and evaluation & \pageref*{app:data-evaluation} \\
\hyperref[app:baselines]{E} & Baselines and hyperparameter selection & \pageref*{app:baselines} \\
\hyperref[app:complete-results]{F} & Main-claim support and result qualifications & \pageref*{app:complete-results} \\
\hyperref[app:algorithm]{G} & Canonical algorithms and failure semantics & \pageref*{app:algorithm} \\
\hyperref[app:qualitative]{H} & Guidance dynamics and diagnostic experiments & \pageref*{app:qualitative} \\
\hyperref[app:reproducibility]{I} & Reproducibility and evidence provenance & \pageref*{app:reproducibility} \\
\bottomrule
\end{tabularx}
\end{center}

For a direct route from the main paper, the fixed-step law and exactness lead
to Secs.~\hyperref[app:formal-setup]{A} to \hyperref[app:complexity]{B}; the
carrier and fitted objectives lead to Sec.~\hyperref[app:reward-training]{C};
the benchmark table and budget curves lead to
Secs.~\hyperref[app:data-evaluation]{D} to \hyperref[app:complete-results]{F};
and Fig.~\ref{fig:guidance-mechanism} leads to
Sec.~\hyperref[app:qualitative]{H}.

\section{Additional Method Details and Guarantees}
\label{app:formal-setup}

\subsection{Fixed-step interface and HMM realization}

This appendix states precisely what is normalized, what is compiled, and where
approximations enter.  Fix one denoising update and one block.  Let
$x=(x_1,\ldots,x_L)$ be the block tokens and let
$\mathcal{S}=\prod_i\mathcal{S}_i$ be the candidate support fixed before the
dynamic program. At the interface level, $e_{t,i}$ is frozen denoiser evidence,
$K_i$ is a nonnegative target-free carrier factor, and
$(\psi_{C,i},\eta_C)$ is the compiled objective. These are the same objects as
in Sec.~\ref{sec:backward-dp}; the appendix does not redefine the method as
an HMM.

The target-free hidden Markov model (HMM) used by the structured-carrier implementation is one
realization of $K_i$. It has latent path
$h=(h_1,\ldots,h_L)$, initial distribution $\pi$, transition matrix $A$, and
token emissions $E$.  With frozen denoiser log evidence $\ell_i(v)$, evidence
temperature $\tau_{\rm e}$, and carrier-emission temperature $\tau_{\rm m}$,
the retained base score is
\begin{align}
\log q_0^{\mathcal{S}}(x,h)
\doteq{}&
\sum_{i=1}^{L}\frac{\ell_i(x_i)}{\tau_{\rm e}}
+\log \pi(h_1)
+\sum_{i=2}^{L}\log A(h_i\mid h_{i-1}) \nonumber\\
&+\sum_{i=1}^{L}\frac{\log E(x_i\mid h_i)}{\tau_{\rm m}},
\qquad x\in\mathcal{S},
\label{eq:base-score}
\end{align}
where $\doteq$ denotes equality up to a path-independent normalizer.  For a
masked position, $\ell_i(v)=\log p_\theta(x_0^i=v\mid x_t)$; the effective
unary factor is proportional to $\exp\{\ell_i(v)/\tau_{\rm e}\}$, reducing to
Eq.~\ref{eq:evidence} at $\tau_{\rm e}=1$. Observed tokens remain clamped.
For a fixed prefix, $\pi$ is replaced by the predictive distribution for the first
block state: filter the prefix to obtain its final-state posterior, then apply
one carrier transition before scoring the first block emission. The neutral
one-state identity carrier is represented by a unit $K_i$ factor.

Let $a_i$ be the reward state and $s_i=(h_i,a_i)$ the product state.  For
$i>1$, an edge $e_i=(s_{i-1},v,s_i)$ has log score
\begin{align}
g_i(e_i)
={}&\frac{\ell_i(v)}{\tau_{\rm e}}
+\log A(h_i\mid h_{i-1})
+\frac{\log E(v\mid h_i)}{\tau_{\rm m}}
+\lambda r_i(a_{i-1},v),
\label{eq:edge-score}
\end{align}
subject to $v\in\mathcal{S}_i$ and
$a_i=\delta(a_{i-1},v)$.  The first edge uses this predictive carrier distribution (or $\pi$ without
a prefix) in place of a transition from the dummy state $h_0$.

Let
$\mathcal{G}=(\mathcal{N},\mathcal{E})$ be the resulting layered acyclic graph.
A vertex at level $i$ is a retained product state $s_i=(h_i,a_i)$; an edge
$e_i=(s_{i-1},x_i,s_i)$ is present only when $x_i$ is in the declared token
support, the carrier transition is allowed, and
$a_i=\delta(a_{i-1},x_i)$.  Define
\begin{equation}
  W(\xi)=\exp\!\left\{\sum_{i=1}^{L}g_i(e_i)
  +\log\eta_C(a_L)\right\}
  \label{eq:appendix-path-weight}
\end{equation}
for a complete path $\xi=(e_1,\ldots,e_L)$, using $\log 0=-\infty$.
The terminal factor $\eta_C$ is the one in
Eq.~\ref{eq:objective-interface}; it is not multiplied by the soft
reward strength.  The path partition is
$Z_{\mathcal{G}}=\sum_{\xi\in\mathcal{G}}W(\xi)$.  The method samples
$W(\xi)/Z_{\mathcal{G}}$ when $0<Z_{\mathcal{G}}<\infty$ and the ancestral
temperature is one.
The special cases in Tab.~\ref{tab:special-cases} share this construction.

\begin{table}[h]
\centering
\small
\setlength{\tabcolsep}{4pt}
\caption{Special cases of the same graph construction.  The identity carrier
is an ablation of the full method; the identity reward controller has one
state.}
\label{tab:special-cases}
\begin{tabular}{p{0.25\linewidth}p{0.25\linewidth}p{0.35\linewidth}}
\toprule
Instance & Product state & Reward factor \\
\midrule
Diffusion-only & identity & $1$ \\
Carrier-only & $h_i$ & $1$ \\
Learned guidance & $(h_i,a_i)$ & $\exp\{\lambda R_\phi(x)\}$ \\
Hard constraint & $(h_i,a_i)$ & $\mathbf{1}[a_L\in\mathcal{A}_{\rm accept}]$ \\
Strict repair & $(h_i,\text{bounded stack}_i)$ &
$\mathbf{1}[x\in\mathcal{L},\ d_H(x,y)=d^\star(y)]$ \\
\bottomrule
\end{tabular}
\end{table}

\subsection{Exact compilation of count energies}

We suppress the fixed condition index $C$ here and allow position-dependent
token coefficients $u_i$. Shared token
coefficients are the special case $u_i(u)=u_u$.

Let $\mathcal{P}=\{p_j\}_{j=1}^{J}$ be the selected token patterns and let the
Aho--Corasick output set $\mathcal{O}(a_{i-1},x_i)$ contain every pattern that
ends after consuming $x_i$~\citep{aho1975efficient}.  The compiler assigns
\begin{equation}
 r_i(a_{i-1},x_i)
 =u_i(x_i)+\sum_{j:p_j\in\mathcal{O}(a_{i-1},x_i)}w_j.
 \label{eq:compiled-edge-reward}
\end{equation}

\begin{lemma}[Count-energy compilation]
\label{lem:count-energy-compilation}
For every token sequence $x$, including sequences with overlapping pattern
occurrences,
\[
\sum_{i=1}^{L}r_i(a_{i-1},x_i)
=\sum_{i=1}^{L}u_i(x_i)+\sum_{j=1}^{J}w_jc_j(x)=R_\phi(x)-b.
\]
\end{lemma}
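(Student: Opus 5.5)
The plan is to split the edge rewards of Eq.~\ref{eq:compiled-edge-reward} into their unary and pattern parts and handle each sum separately. Summing over $i$ gives
\[
\sum_{i=1}^{L}r_i(a_{i-1},x_i)=\sum_{i=1}^{L}u_i(x_i)+\sum_{i=1}^{L}\sum_{j:\,p_j\in\mathcal O(a_{i-1},x_i)}w_j .
\]
The unary part already matches the target. For the pattern part I will exchange the order of summation, which rewrites it as $\sum_{j=1}^{J}w_j\,\#\{i: p_j\in\mathcal O(a_{i-1},x_i)\}$. The identity then reduces to one claim: for every $j$, the number of positions $i$ at which $p_j$ is reported equals $c_j(x)$. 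Here $c_j(x)$ is the number of (possibly overlapping) occurrences of $p_j$ in $x$, i.e.\ the number of end positions $i$ with $x_{i-|p_j|+1:i}=p_j$.

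To prove this claim I will use the standard Aho--Corasick invariant and establish it by induction on $i$. The invariant is that after reading $x_{1:i}$ from $a_0=a_{\mathrm{init}}$, the state $a_i=\delta(a_{i-1},x_i)$ is the longest suffix of $x_{1:i}$ that is a prefix of some selected pattern. This is exactly the property stated in Sec.~\ref{sec:reward-compiler}.

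With the invariant in hand, I need to show that the output set is correct. The output set $\mathcal O(a_{i-1},x_i)$ is the output of the node $a_i$ together with everything reachable from $a_i$ along its failure (dictionary-suffix) links. I will show this set equals the set of patterns that are suffixes of $x_{1:i}$.

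\emph{Soundness.} Every node on the failure chain of $a_i$ is a suffix of $a_i$, hence a suffix of $x_{1:i}$. So every reported pattern ends at position $i$.

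\emph{Completeness.} Suppose $p_j$ is a suffix of $x_{1:i}$. Then $p_j$ is a suffix of $x_{1:i}$ that is also a prefix of a pattern (namely itself). By maximality of $a_i$, $p_j$ is a suffix of $a_i$. The failure chain of $a_i$ enumerates all suffixes of $a_i$ that are trie nodes, so it reaches $p_j$.

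Each pattern is listed once per position, because $\mathcal O$ is a set of distinct patterns. Hence $p_j\in\mathcal O(a_{i-1},x_i)$ if and only if an occurrence of $p_j$ ends at $i$. This is precisely the end-position characterization of $c_j(x)$, and overlapping occurrences are covered automatically because they end at different positions. Summing over $i$ gives the claim.

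The final equality $=R_\phi(x)-b$ should follow by definition. I expect the surrogate to be parameterized as $R_\phi(x)=b+\sum_iu_i(x_i)+\sum_jw_jc_j(x)$, with a bias $b$ that is constant across sequences and therefore dropped by the compiler. If the paper's definition (in Sec.~\ref{app:reward-training}) instead uses shared coefficients $u_u$, I will invoke the specialization $u_i(u)=u_u$ noted above.

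The main obstacle is the induction for the longest-suffix invariant, specifically the fallback case. When the goto transition on $x_i$ is absent, I must argue that following failure links from $a_{i-1}$ until a node with an $x_i$-child exists yields the longest valid suffix. The key fact is that any suffix of $x_{1:i}$ that is a pattern prefix has the form $s\,x_i$, where $s$ is a suffix of $x_{1:i-1}$ that is a pattern prefix. Such an $s$ is therefore a suffix of $a_{i-1}$, and the failure chain visits these candidates in decreasing length, so the first one admitting an $x_i$-child gives the longest match. I will cite~\citep{aho1975efficient} for this construction rather than reprove it in full.
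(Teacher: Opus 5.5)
Your proposal is correct and follows the same route as the paper's proof. Both split off the unary term and use the Aho--Corasick property that each selected pattern is reported exactly at every position where an occurrence ends, with failure/output links catching overlapping and nested matches, and both drop the intercept $b$ as a path-independent constant. Your longest-suffix invariant and the soundness/completeness argument for the output set only make explicit what the paper asserts in one line by appeal to the standard construction.
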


\begin{proof}
The token term contributes once at every position.  Aho--Corasick reports a
pattern exactly at each position at which one occurrence ends; output links
retain simultaneous suffix matches, so overlapping and nested occurrences are
not discarded.  Summing the reported $w_j$ terms therefore contributes
$w_j$ exactly $c_j(x)$ times.  The fitted intercept is absent because it is
constant across all paths in the same normalized kernel.
\end{proof}

\paragraph{Automatic weighted-automaton construction.}
The compiler inserts every selected token pattern into a trie.  Breadth-first
failure links map a missing transition to the longest suffix that is also a
trie prefix.  The reward stored at a state is its own terminal weight plus the
reward at its failure state.  Consequently one transition emits the sum of
\emph{all} selected patterns ending at that position, including nested and
overlapping matches.  For the candidate tokens admitted by a denoising block,
the compiler caches $\delta(a,v)$ and the phrase-match reward for every
reachable state $a$ and token $v$; the position-dependent token term is added
when scoring the edge at position $i$.

\paragraph{Record-conditioned lexical DFA.}
CommonGen uses the same transition interface but no learned weights.  For each
record, the adapter enumerates the accepted standalone forms of every concept:
the base form, its regular plural (\texttt{s}, \texttt{es}, or
\texttt{ies}), and its possessive form.  These finite token strings are
compiled into a lexical prefix machine.  A DFA state stores the active form
prefixes together with a bit mask of concepts already completed at word
boundaries; a forbidden concept, when present, enters a reject sink.  The
terminal score is zero only when every required bit is set and no reject state
has been reached, and is $-\infty$ otherwise.  We enumerate only states
reachable under the declared per-position token support.  If that support
cannot reach acceptance, the adapter fails closed rather than softening the
constraint.

\subsection{Exact compilation and oracle approximation}
\label{app:compilation-approximation}

Exact compilation means that the local transition and terminal weights multiply
to a positive sequence-independent constant times the fitted sequence weight.
For the count score, Lem.~\ref{lem:count-energy-compilation} gives
\[
\prod_{i=1}^{L}\exp\{\lambda r_{C,i}(a_{i-1},x_0^i)\}
=\exp\{-\lambda b_C\}\exp\{\lambda R_{\phi,C}(x_0)\}.
\]
The first factor cancels from the guided distribution. For a hard recognizer,
the compiled weight equals the acceptance indicator, including its zero set.
Thus compilation implements the chosen objective without changing its guided
law. It does not guarantee that a fitted objective matches an unrestricted oracle.

\begin{proof}[Proof of Prop.~\ref{prop:compilation-approximation}]
All distributions below use the same fixed $x_t$, base distribution, and support.
Let $d(x_0)=\log W_C(x_0)-\log W_C^{\mathrm{oracle}}(x_0)$ and define
\[
A(s)=\log\mathbb E_{q_C^{\mathrm{oracle}}}[e^{s d(x_0)}],
\qquad
q_s(x_0)=q_C^{\mathrm{oracle}}(x_0)e^{s d(x_0)-A(s)}.
\]
Then $q_1=q_C$, $A'(s)=\mathbb E_{q_s}[d]$, and
$A''(s)=\operatorname{Var}_{q_s}(d)$. Since $|d-c|\leq\epsilon$,
$A''(s)\leq\mathbb E_{q_s}[(d-c)^2]\leq\epsilon^2$. Consequently,
\[
D_{\mathrm{KL}}(q_C^{\mathrm{oracle}}\Vert q_C)
=A(1)-A(0)-A'(0)
=\int_0^1(1-s)A''(s)\,ds\leq\epsilon^2/2.
\]
Replacing the integrand by $sA''(s)$ gives the same bound in the reverse
direction. Exact compilation leaves $q_1$ unchanged by the constant-cancellation
argument above.
\end{proof}

\paragraph{Function-class gap and actual fitting error.}
On this finite support, let $\mathcal F_m$ be a class of log-weights that admit
exact compilation under representation budget $m$. Define
\[
\delta_m=\inf_{f\in\mathcal F_m,\,c\in\mathbb R}
\|f-\log W_C^{\mathrm{oracle}}-c\|_\infty.
\]
For the actual fitted log-weight $\log W_{C,m}\in\mathcal F_m$, let
$\epsilon_m=\inf_c\|\log W_{C,m}-\log W_C^{\mathrm{oracle}}-c\|_\infty$
and $\rho_m=\epsilon_m-\delta_m\geq0$. The proposition applies with
$\epsilon_m=\delta_m+\rho_m$, not with the class infimum alone.
Nested classes make $\delta_m$ nonincreasing, but do not imply that it tends to
zero. If both $\delta_m$ and $\rho_m$ tend to zero, the fixed-step KL gap does
too. Without attainment, $\delta_m=0$ means membership in the closure modulo
constants, not necessarily an exact finite representation.

These are conditional approximation statements, not convergence guarantees
for the current training procedure. Prediction loss on training data does not
establish uniform log-weight error on generated sequences. For positive
Boltzmann weights, the comparison is between log-weights, so guidance strength
is already included. If weights can be zero, first require matching feasible
support and restrict the comparison to its positive part. The result does not
cover support mismatch, arbitrary pruning, non-unit sampling temperature, or
the full multi-step diffusion distribution.

\paragraph{The three normalizers.}
The main text uses three related but non-interchangeable quantities. On the
same fixed support,
\begin{equation}
Z_0(x_t)=B_0(h_0),\qquad
Z_C(x_t)=\mathbb E_{q_0}[W_C]
=\frac{M_0(h_0,a_{\mathrm{init}})}{B_0(h_0)}.
\label{eq:appendix-normalizer-bridge}
\end{equation}
Thus $B_0$ normalizes the neutral carrier--evidence joint, $M_0$ is the
unnormalized total weight after adding the compiled objective, and $Z_C$ is the
expected objective weight under the already normalized $q_0$. When a scorer
intercept is omitted during compilation, the normalized guided law is
unchanged, but $M_0$ is multiplied by the corresponding path-independent
constant; all partition values in this appendix use the intercept-free
compiled-weight convention.

\paragraph{Meaning of the backward-weight ratio.}
For a base-reachable state with $B_i(h)>0$, start the objective path at
$a_i=a$. The notation $a_i:=a$ initializes the objective automaton for the
continuation; it does not condition the neutral law on a past objective state.
Expanding the expectation in Eq.~\ref{eq:guided-continuation-expectation} gives
\begin{equation}
\begin{aligned}
&\mathbb E_{q_0}\!\left[
\eta_C(a_L)\prod_{j=i+1}^{L}\psi_{C,j}(a_{j-1},x_0^j)
\,\middle|\,h_i=h,x_t;\ a_i:=a\right]\\
&=\sum_{x_0^{i+1:L},h_{i+1:L}}
\frac{\prod_{j=i+1}^{L}e_{t,j}(x_0^j)K_j(h_{j-1},x_0^j,h_j)}{B_i(h)}
\eta_C(a_L)\prod_{j=i+1}^{L}\psi_{C,j}(a_{j-1},x_0^j)\\
&=\frac{M_i(h,a)}{B_i(h)}.
\end{aligned}
\label{eq:remaining-objective-expectation}
\end{equation}
Eq.~\ref{eq:remaining-objective-expectation} follows by dividing the
remaining product weights by their carrier-only normalizer $B_i(h)$ and taking
the expectation of the remaining objective factors, starting at $a_i=a$.
Thus $M_i$ itself is generally not a probability. For indicator constraints,
$M_i/B_i$ is the continuation acceptance probability, with previous violations
recorded in the objective state. For soft objectives it is an expected weight
and may exceed one. This identity uses the unpruned product construction, or
the same fixed per-position token support in numerator and denominator. If
product-state pruning makes the remaining support depend on $a$, the matching
carrier-only normalizer must instead be computed on that same retained graph
and generally depends on both $h$ and $a$.

\paragraph{Guided sampling conditional.}
\label{app:guided-conditional}
Fix $(h,a)$ with $M_{i-1}(h,a)>0$, and let $a'=\delta_{C,i}(a,u)$.
Marginalizing the remaining assignments gives
\begin{equation}
\begin{aligned}
&q_C^{\mathrm{joint}}(x_0^i=u,h_i=h'\mid h_{i-1}=h,a_{i-1}=a,x_t)\\
&\quad=\sum_{x_0^{i+1:L},h_{i+1:L}}
q_C^{\mathrm{joint}}(x_0^i=u,h_i=h',x_0^{i+1:L},h_{i+1:L}
\mid h_{i-1}=h,a_{i-1}=a,x_t)\\
&\quad=\frac{e_{t,i}(u)K_i(h,u,h')\psi_{C,i}(a,u)M_i(h',a')}
{M_{i-1}(h,a)}\\
&\quad=q_0(x_0^i=u,h_i=h'\mid h_{i-1}=h,x_t)\,
\psi_{C,i}(a,u)\frac{M_i(h',a')}{B_i(h')}
\frac{B_{i-1}(h)}{M_{i-1}(h,a)}.
\end{aligned}
\label{eq:guided-conditional-derivation}
\end{equation}
The last equality uses Eq.~\ref{eq:carrier-conditional} on base-supported
choices with $B_i(h')>0$, with the same continuation support in both messages.
Since $B_{i-1}(h)/M_{i-1}(h,a)$ is constant across $(u,h')$, this gives
Eq.~\ref{eq:forward-sampling}. The direct fraction in the second equality
computes the conditional without a separate carrier-only backward pass.

\subsection{Objective-agnostic cache reuse}

For corpus rows $\mathcal{D}$ and mining configuration $\eta$, the candidate
miner is a deterministic map $\mathcal{M}(\mathcal{D},\eta)$ based only on
tokenized document frequency.  Its count matrix is likewise a deterministic
map of the rows and candidate grammar.  Therefore, changing a target label
while holding $\mathcal{D}$ and $\eta$ fixed does not require re-mining or
recounting.  Target changes rerun only score tests, selection, and fitting.
This separation is an algorithmic contract, not merely a cache optimization.

\subsection{Strict minimum-edit repair}
\label{app:repair}

Let $y$ be an observed sequence, $F$ a set of positions that must remain
fixed, $\mathcal{L}$ the valid language, and $\mathcal{S}$ the declared token
support.  We first compute
\begin{equation}
d^\star(y)=\min_{x\in\mathcal{S}\cap\mathcal{L},\;x_F=y_F}d_H(x,y).
\label{eq:repair-distance}
\end{equation}
The repair kernel is then
\begin{equation}
q_{\rm repair}(x,h\mid y)\propto
q_0^{\mathcal S}(x,h)\,
\mathbf{1}[x\in\mathcal{L},\ x_F=y_F]\,
\mathbf{1}[d_H(x,y)=d^\star(y)].
\label{eq:repair-law}
\end{equation}
A forward min-plus pass gives the cheapest prefix cost for every bounded-stack
state, and a backward min-plus pass gives its cheapest valid completion.  An
edge is retained exactly when its prefix cost, local substitution cost, and
suffix cost sum to $d^\star(y)$.  Ordinary sum-product messages then weight
this optimal subgraph by frozen denoiser evidence and the target-free carrier;
Alg.~\ref{alg:strict-repair} samples from Eq.~(\ref{eq:repair-law}).

The present contract does not cover insertions, deletions, an unbounded
pushdown language, or dynamic multi-step repair evidence. The development-only
repair diagnostic is reported with the other diagnostics in
App.~\ref{app:dyck-repair-diagnostic}.

\subsection{Exact backward inference and ancestral replay}
\label{app:exact-replay}

\begin{proposition}[Exactness of the compiled fixed-step query]
\label{prop:compiled-fixed-step}
Fix $x_t$, the evidence, and finite carrier and objective state spaces. Assume
all factors in Eq.~\ref{eq:product-law} are finite and nonnegative, with
$0<Z=M_0(h_0,a_{\mathrm{init}})<\infty$. In exact arithmetic,
Eq.~\ref{eq:product-recurrence} and the unit-temperature conditionals in
Eq.~\ref{eq:forward-sampling} sample the normalized compiled joint.
\end{proposition}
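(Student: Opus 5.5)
The plan is to prove, by backward induction on $i$, that $M_i(h,a)$ equals the total unnormalized weight of all completions from the joint state $(h,a)$ after position $i$. Concretely, write
\[
M_i(h,a)=\sum_{x_0^{i+1:L},h_{i+1:L}}\eta_C(a_L)\prod_{j=i+1}^{L}e_{t,j}(x_0^j)\overline K_{C,j}(h_{j-1},a_{j-1},x_0^j,h_j),
\]
with $h_i=h$, $a_i=a$, and $a_j=\delta_{C,j}(a_{j-1},x_0^j)$ determined by the tokens. The base case $i=L$ is Eq.~\ref{eq:backward-terminal}, since the sum is over the empty completion. For the inductive step, split the sum over completions from position $i-1$ according to the first pair $(x_0^i,h_i)=(u,h')$. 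The objective state after reading $u$ is then the single value $a'=\delta_{C,i}(a,u)$. The first factor pulls out of the sum, and the remaining sum is $M_i(h',a')$ by the induction hypothesis, which gives exactly Eq.~\ref{eq:product-recurrence}. All sums are finite because the state spaces and $\mathcal V$ are finite, and every term is finite and nonnegative, so exchanging and regrouping the sums is legitimate. In particular $M_0(h_0,a_{\mathrm{init}})$ is the normalizer $Z$ of Eq.~\ref{eq:product-law}.

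Next, I will show that the ancestral procedure samples from $q_C^{\mathrm{joint}}$. Start at $(h_0,a_{\mathrm{init}})$ and draw $(x_0^i,h_i)$ sequentially by Eq.~\ref{eq:guided-sampling-normalized}. Equivalently one can use Eq.~\ref{eq:forward-sampling}: by the derivation in Eq.~\ref{eq:guided-conditional-derivation} it differs only by a factor constant in $(u,h')$, so at unit temperature it normalizes to the same conditional.

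Each conditional must be well defined. By the recurrence, the denominator $M_{i-1}(h,a)$ equals the sum of the numerators, so each conditional is a probability distribution whenever $M_{i-1}(h,a)>0$. For the path to stay in positive-mass states, I will argue by induction. $M_0>0$ by hypothesis. If $(u,h')$ is drawn with positive probability, then its numerator is positive, which forces $M_i(h',a')>0$. Hence every visited state has positive mass almost surely.

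Multiplying the conditionals along a path $(x_0,h_{1:L})$, the product telescopes:
\[
\prod_{i=1}^L\frac{e_{t,i}(x_0^i)\overline K_{C,i}(\cdot)M_i(h_i,a_i)}{M_{i-1}(h_{i-1},a_{i-1})}
=\frac{M_L(h_L,a_L)}{M_0(h_0,a_{\mathrm{init}})}\prod_i e_{t,i}\overline K_{C,i}.
\]
Using $M_L=\eta_C(a_L)$, this is exactly Eq.~\ref{eq:product-law}.

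Paths with zero compiled weight need separate handling. Such a path either has some zero factor along it, or it reaches a zero-mass state, or it has $\eta_C(a_L)=0$. In each case the sampler assigns it probability zero, consistent with the target law, because the telescoping argument applies up to the first zero factor. Since the objective path is a deterministic function of the tokens, marginalizing $h_{1:L}$ gives the law of $x_0$ proportional to $q_0W_C$ via Eq.~\ref{eq:objective-interface}.

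The main obstacle is bookkeeping rather than depth. One part is this zero-mass case. The other is making sure the conditionals of Eq.~\ref{eq:forward-sampling} that involve $B_i$ agree with Eq.~\ref{eq:guided-sampling-normalized}, which requires $B_i(h')>0$ on base-supported choices and the same support in both messages. I would handle this by taking Eq.~\ref{eq:guided-sampling-normalized} as the primary sampler, and invoking Eq.~\ref{eq:guided-conditional-derivation} only on choices with $B_i(h')>0$. On the remaining choices the numerator is already zero, since $B_i(h')=0$ forces $M_i(h',\cdot)=0$ on the same support.
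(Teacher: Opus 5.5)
Your proposal is correct and follows the same route as the paper's proof. Both use backward induction to show that each message $M_{i-1}$ sums all compatible remaining factor products, so each denominator is the sum of its numerators; both note that only positive-mass successors are ever sampled, telescope the product of conditionals down to $\eta_C(a_L)/Z$ times the path factors, and sum out the carrier path to recover $q_0 W_C/Z_C$. Your extra bookkeeping adds two points the paper's proof leaves implicit without changing the argument: the handling of zero-weight paths, and the remark that $B_i(h')=0$ forces $M_i(h',\cdot)=0$, which makes the $B_i$ cancellation in the explicit-$B_i$ form of the conditional well defined.
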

\begin{proof}
The terminal message is the remaining weight $\eta_C(a_L)$. Backward
induction shows that each preceding message sums all compatible remaining
factor products. Thus the denominator of Eq.~\ref{eq:forward-sampling}
is the sum of its numerators. Only positive-mass successors can be sampled,
so each subsequent conditional is defined. For a complete sampled path,
\begin{align*}
\prod_{i=1}^{L}
\frac{e_{t,i}(x_0^i)K_i(h_{i-1},x_0^i,h_i)
\psi_{C,i}(a_{i-1},x_0^i)M_i(h_i,a_i)}{M_{i-1}(h_{i-1},a_{i-1})}
\\[0.4em]
{}=\frac{\eta_C(a_L)}{Z}
\prod_{i=1}^{L}e_{t,i}(x_0^i)K_i(h_{i-1},x_0^i,h_i)
\psi_{C,i}(a_{i-1},x_0^i).
\end{align*}
All intermediate messages cancel. Summing out the carrier path recovers
Eq.~\ref{eq:guided-law}. The argument also applies to a fixed restricted
graph when the same edges and factors are used for inference and sampling.
\end{proof}

\paragraph{Log-domain form on a retained graph.}

For a state $s_i$ define the suffix mass
\begin{equation}
 \beta_i(s_i)=
 \sum_{\xi_{i+1:L}\mid s_i}
 \exp\!\left\{\sum_{j=i+1}^{L}g_j(e_j)
 +\log\eta_C(a_L)\right\}.
 \label{eq:beta-definition}
\end{equation}
At the terminal level,
$\beta_L(h_L,a_L)=\exp\{\log\eta_C(a_L)\}$.  The ordinary sum-product
recursion is
\begin{equation}
 \beta_{i-1}(s_{i-1})=
 \sum_{e_i:s_{i-1}\to s_i}\exp\{g_i(e_i)\}\beta_i(s_i).
 \label{eq:beta-recursion-prob}
\end{equation}
Eq.~(\ref{eq:product-recurrence}) is the same probability-space recursion
with the denoiser, carrier, and objective factors written explicitly.

\begin{corollary}[Exact replay on a retained graph]
\label{cor:retained-graph-replay}
Under Prop.~\ref{prop:compiled-fixed-step}, fix the candidate tokens and
the retained product-state graph. Sampling each edge according to
\begin{equation}
 p(e_i\mid s_{i-1})
 =\frac{\exp\{g_i(e_i)\}\beta_i(s_i)}
 {\beta_{i-1}(s_{i-1})}
 \label{eq:appendix-edge-conditional}
\end{equation}
produces an exact sample from $W(\xi)/Z_{\mathcal{G}}$ when
$0<Z_{\mathcal G}<\infty$ and $\tau_{\rm a}=1$. This is the same telescoping
argument as the proposition, restricted to the fixed retained graph rather
than a second exactness claim.
\end{corollary}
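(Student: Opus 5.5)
The corollary is the telescoping argument of Prop.~\ref{prop:compiled-fixed-step}, transplanted from the explicit factor form to the log-edge form on the fixed retained graph $\mathcal G$. I will not redo that argument. I will instead check that the objects in Eq.~\ref{eq:appendix-edge-conditional} play exactly the roles the proposition's proof needs.

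\textbf{Step 1: the messages sum over suffixes.} With $\beta_L(h_L,a_L)=\eta_C(a_L)$, I show by backward induction on $i$ that the recursion in Eq.~\ref{eq:beta-recursion-prob} returns the suffix mass in Eq.~\ref{eq:beta-definition}. The inductive step splits each suffix path out of $s_{i-1}$ by its first edge $e_i$. Because $\mathcal G$ is layered and acyclic, every suffix from $s_{i-1}$ is exactly one retained edge $e_i:s_{i-1}\to s_i$ followed by a suffix from $s_i$, and this decomposition is a bijection. Its weight factorizes as $\exp\{g_i(e_i)\}$ times the suffix weight. The only requirement is that the same retained edges and the same factors are used in both the definition and the recursion, which is the stated hypothesis. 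At the root this gives $\beta_0(s_0)=Z_{\mathcal G}$.

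\textbf{Step 2: the sampler is well defined.} From Step 1, the denominator $\beta_{i-1}(s_{i-1})$ in Eq.~\ref{eq:appendix-edge-conditional} is the sum of the numerators over the outgoing edges, so each conditional is a probability distribution whenever $\beta_{i-1}(s_{i-1})>0$. We start from $s_0$, where $\beta_0=Z_{\mathcal G}\in(0,\infty)$. Thereafter only edges with $\exp\{g_i\}\beta_i(s_i)>0$ carry positive probability, so any visited state has $\beta_i>0$. Finiteness of all $g_i$ values and of $\eta_C$ keeps every message finite.

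\textbf{Step 3: telescoping.} For a sampled complete path $\xi=(e_1,\ldots,e_L)$, the product of the conditionals is
\[
\prod_{i=1}^{L}\frac{\exp\{g_i(e_i)\}\beta_i(s_i)}{\beta_{i-1}(s_{i-1})}.
\]
The intermediate $\beta$'s cancel, leaving $\exp\{\sum_i g_i(e_i)\}\,\beta_L(s_L)/\beta_0(s_0)$. By Eq.~\ref{eq:appendix-path-weight} and Step 1 this equals $W(\xi)/Z_{\mathcal G}$.

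The sampling temperature enters here. If $\tau_{\rm a}\neq 1$, the conditionals would be $(\cdot)^{1/\tau_{\rm a}}$ renormalized locally, and that local renormalization breaks the cancellation. So the assumption $\tau_{\rm a}=1$ is exactly what Step 3 uses.

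\textbf{Main obstacle.} The only delicate point is consistency of the retained graph. The $\beta$ recursion and the sampler must use the same $\mathcal G$: the same per-position candidate supports, the same transitions $\delta$, and the same $\eta_C$. Otherwise Step 1's bijection fails. This mirrors the caveat stated after Eq.~\ref{eq:remaining-objective-expectation}. I would discharge it by noting that both computations are defined on the same edge set $\mathcal E$.
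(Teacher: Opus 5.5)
Your proposal is correct and follows the paper's own route. The paper justifies the corollary by the proposition's argument (backward induction so that each message sums all compatible suffix weights, denominators equal the sum of numerators, only positive-mass successors are sampled, and the intermediate messages telescope to $W(\xi)/Z_{\mathcal G}$), restricted to a retained graph on which inference and sampling use the same edges and factors, which is exactly the consistency condition you single out. One small imprecision in Step~2: the hypothesis you need is that each factor $\exp\{g_i(e_i)\}$ and each $\eta_C$ is finite, i.e.\ $g_i<+\infty$, not that every $g_i$ is finite, since $g_i=-\infty$ is allowed for zero factors.
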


\begin{corollary}[Hard-DFA conditioning]
With $\eta_C(a_L)=\mathbf 1[a_L\in\mathcal A_{\mathrm{accept}}]$
and no soft reward factors, exact replay samples the retained base distribution
conditioned on DFA acceptance. This indicator is independent of $\lambda$,
so hard constraints remain active even when soft guidance has zero strength.  If no
accepting path exists, $Z_{\mathcal{G}}=0$ and the correct behavior is a
fail-closed infeasibility record.
\end{corollary}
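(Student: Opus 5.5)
The statement immediately above is the Hard-DFA conditioning corollary. I will treat it as a specialization of Corollary~\ref{cor:retained-graph-replay} and Prop.~\ref{prop:compiled-fixed-step}, with no new inference argument. First I set $\psi_{C,i}\equiv 1$, which is the same as $r_i\equiv 0$ in Eq.~\ref{eq:edge-score}, and take $\eta_C(a_L)=\mathbf 1[a_L\in\mathcal A_{\mathrm{accept}}]$. Then the path weight Eq.~\ref{eq:appendix-path-weight} factorizes as the retained base weight $\exp\{\sum_i g_i^{\mathrm{base}}(e_i)\}$ times the acceptance indicator.

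Second, I use determinism of the objective automaton. Since $a_i=\delta(a_{i-1},x_i)$ from $a_{\mathrm{init}}$, each token sequence together with its carrier path determines a unique product path. Its terminal state $a_L$ is a function of $x$ alone, so $\mathbf 1[a_L\in\mathcal A_{\mathrm{accept}}]=\mathbf 1[x\models C]$. Summing $W(\xi)$ over carrier paths therefore gives $q_0^{\mathcal S}(x)\,\mathbf 1[x\models C]$ up to the base normalizer. If $Z_{\mathcal G}>0$, Corollary~\ref{cor:retained-graph-replay} says replay samples $W/Z_{\mathcal G}$. That is exactly the retained base law conditioned on the acceptance event, with $Z_{\mathcal G}/Z_0$ equal to the acceptance probability, matching Eq.~\ref{eq:appendix-normalizer-bridge}.

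Third, I check $\lambda$-independence. The soft strength only multiplies $r_i$ in Eq.~\ref{eq:edge-score}, while the appendix states that $\eta_C$ is not scaled by $\lambda$. So at $\lambda=0$ the terminal indicator still enters $\beta_L$ unchanged, and the constraint stays active.

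Finally, I handle the infeasible case. If no accepting path exists in $\mathcal G$, every term of $Z_{\mathcal G}$ carries a zero indicator, so $Z_{\mathcal G}=0$. The hypothesis $0<Z$ of Prop.~\ref{prop:compiled-fixed-step} then fails and the conditionals Eq.~\ref{eq:appendix-edge-conditional} are undefined, since $\beta_0=0$. The only behavior consistent with the target law, an event of probability zero, is to report infeasibility rather than sample.

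The main point requiring care is the second step. One has to make sure that pruning or the restricted support does not change which sequences count as accepting, because "retained base distribution" means the base law restricted to the same graph. I would argue this by noting that the numerator and the normalizer are computed on the same fixed retained graph, as Corollary~\ref{cor:retained-graph-replay} requires.
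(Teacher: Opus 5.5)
Your proposal is correct and follows essentially the argument the paper relies on: the corollary is given without a separate proof, as a specialization of Prop.~\ref{prop:compiled-fixed-step} and Cor.~\ref{cor:retained-graph-replay}. In that specialization $\psi_{C,i}\equiv 1$ makes $W(\xi)$ the retained base weight times the acceptance indicator, $\lambda$ scales only $r_i$ and never $\eta_C$, and $Z_{\mathcal G}=0$ when no accepting path survives. Your caveat that the conditioning is relative to the same retained graph, so any $a$-dependent pruning must also enter the base normalizer, matches the paper's remark after Eq.~\ref{eq:remaining-objective-expectation}.
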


\paragraph{Why non-unit ancestral temperature is different.}
Dividing each prefix-conditional logit by $\tau_{\rm a}\neq1$ defines a valid
tempered policy.  Because the backward message in one conditional also changes
the state reached by later conditionals, the product of these locally tempered
conditionals is not generally proportional to a single global power
$W(\xi)^{1/\tau_{\rm a}}$.  We therefore report it as a policy choice rather
than as an exact sample from Eq.~(\ref{eq:guided-law}).

For example, consider three complete paths with weights $(1,1,4)$, where the
first two share their first branch. At $\tau_{\rm a}=2$, locally tempering the
prefix conditionals gives probabilities approximately
$(0.2071,0.2071,0.5858)$, whereas globally normalizing the square-root path
weights gives $(0.25,0.25,0.50)$. Positive local temperature can preserve hard
zero support, but it does not preserve the unit-temperature sampling law.

\section{Implementation and Computational Cost}
\label{app:complexity}

\subsection{Graph-level time and memory}

Let $V_i\subseteq\mathcal N$ be the retained states at level $i$ and $E_i$ the retained edges from
level $i-1$ to $i$.  Write $|V|=\sum_i|V_i|$ and
$|E|=\sum_i|E_i|$.

\begin{theorem}[Finite-graph complexity]
The backward partition computation takes $\mathcal{O}(|E|)$ arithmetic
operations.  Retaining all messages for replay uses $\mathcal{O}(|V|)$
message memory; partition-only inference can stream levels using
$\mathcal{O}(\max_i|V_i|)$ message memory.  One replay is no more expensive
than enumerating the outgoing edges of its $L$ visited states and is dominated
by the preceding backward pass.
\end{theorem}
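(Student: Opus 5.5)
The plan is a direct cost accounting over the layered acyclic graph $\mathcal G=(\mathcal N,\mathcal E)$, organized around the backward recursion in Eq.~\ref{eq:beta-recursion-prob}.

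First, the backward pass. I initialize $\beta_L(s)=\eta_C(a_L)$ for every $s\in V_L$, at cost $\mathcal O(|V_L|)$. For $i=L,\dots,1$, I then iterate once over the edges $E_i$. For each edge $e_i=(s_{i-1},v,s_i)$, I evaluate $\exp\{g_i(e_i)\}$ and add $\exp\{g_i(e_i)\}\beta_i(s_i)$ into an accumulator for $\beta_{i-1}(s_{i-1})$, with accumulators initialized to zero at cost $\mathcal O(|V_{i-1}|)$.

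This step relies on two assumptions, which I will state explicitly. Each edge score $g_i(e_i)$ from Eq.~\ref{eq:edge-score} is obtainable in $\mathcal O(1)$ arithmetic operations, since it is a sum of four cached table lookups: evidence, transition, emission, and the compiled reward from the cached $\delta(a,v)$ and phrase reward described in the automaton construction. Each edge also stores pointers to its endpoints, so it is touched exactly once.

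The total cost is therefore $\mathcal O(|E|+|V|)$. To reduce this to $\mathcal O(|E|)$, I need $|V|=\mathcal O(|E|+1)$. This holds because every retained state other than the source lies on some edge. Retained states are by definition reachable from $(h_0,a_{\mathrm{init}})$, so each non-source state has at least one incoming edge, and distinct states at the same level have distinct incoming edges. Hence $|V|\le|E|+1$. Making this counting convention precise, so that isolated states are excluded from $V$, is the one step I expect to need care; if pruning could leave unreachable vertices, I would instead state the bound as $\mathcal O(|V|+|E|)$ and note that it collapses to $\mathcal O(|E|)$ on trimmed graphs.

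For memory, replay needs every $\beta_i$ on hand when sampling Eq.~\ref{eq:appendix-edge-conditional}, so I keep one scalar per retained state, for $\mathcal O(|V|)$ message memory. Computing only the partition $Z_{\mathcal G}=\beta_0(s_0)$ needs $\beta_i$ only to form $\beta_{i-1}$. I can therefore keep two level buffers and discard level $i$ after level $i-1$ is formed, for $\mathcal O(\max_i|V_i|)$ message memory. Here I count message memory only, separate from graph storage, which is what the statement says.

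For replay, I start at $s_0$. At step $i$, I enumerate the outgoing edges of the current state $s_{i-1}$, compute the numerators $\exp\{g_i(e_i)\}\beta_i(s_i)$, and sample; by Prop.~\ref{prop:compiled-fixed-step} the denominator is already $\beta_{i-1}(s_{i-1})$. This costs $\mathcal O(\deg^+(s_{i-1}))$. Summed over the $L$ visited states, the outgoing-edge sets are disjoint subsets of $\mathcal E$, so the total replay cost is at most $\sum_i|E_i|=|E|$. It is thus dominated by the backward pass.
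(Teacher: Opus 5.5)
Your proposal is correct and uses the same accounting as the paper's proof: one pass over each retained edge for the backward messages, one stored scalar per retained vertex for replay, two adjacent levels when only the partition is needed, and one visited state per level during replay. The paper's three-line proof leaves two points implicit, and you make both precise: the $\mathcal O(|V|)$ initialization cost, which you absorb via $|V|\le|E|+1$ on a graph trimmed to reachable states, and the assumption that each edge score costs $\mathcal O(1)$.
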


\begin{proof}
Every backward message sums each outgoing retained edge once, so the total work
is linear in the edge list.  A stored scalar message per retained vertex gives
the replay bound; if no replay is required, the acyclic recurrence needs only
two adjacent levels.  Replay visits exactly one state per level.
\end{proof}

For candidate width $K$, $H$ carrier states, and $M$ retained reward states,
a dense Cartesian graph has $\mathcal{O}(LMH)$ vertices and at most
$\mathcal{O}(LKMH^2)$ edges.  The deterministic automaton transition fixes the
next reward state; it does not introduce another factor of $M$.  With the
identity carrier, $H=1$ and the time bound is $\mathcal{O}(LKM)$.  In the
implemented batched version, batch size multiplies graph work while enabling
GPU parallelism; padding and packing determine the constant factor.

These bounds begin after the finite graph is specified. They do not make graph
construction polynomial in every compact problem description: bounded stacks,
concept bit masks, multiple controller products, and explicit Sudoku support
can make the expanded state space large. Reported computational cost therefore
separates support construction or pre-solving, retained edge storage, message
workspace, backward arithmetic, and host-model calls. The
$\mathcal O(|E|)$ and $\mathcal O(|V|)$ statements apply only to message
passing on the materialized graph.

\subsection{Measured Full-generation time}
\label{app:full-generation-cost}

The graph-work bound alone does not determine wall time. Wide product states
can require specialized GPU reductions and sampling operators to avoid many
small launches and host--device synchronizations. We implemented targeted
fast paths, including batched message passing and fused ancestral sampling,
but did not exhaustively engineer a separate kernel for every task and
hardware setting. Tab.~\ref{tab:full-generation-cost} measures the resulting
implementations; it is not a lower bound on the method's attainable latency.

The primary measurements use NVIDIA RTX 4090 GPUs; measurements from a
separate cloud GPU environment are marked in the table. Each row uses the frozen Full schedule: $b=32$ for Dyck,
CommonGen, and RTP, $b=64$ for Sudoku, $b=48$ for DNA, and one editable site
per update for Protein-1YCR. Models and task-specific configurations follow
Tab.~\ref{tab:main-results} and the DEV selections in
Tab.~\ref{tab:appendix-selection-contracts}. Each reported window generates
64 final outputs with the method's native internal particle, classifier, or
search work. Static model loading is separate. Timed steady-state windows
follow a real warmup; labeled first-request windows include Graph capture or
just-in-time kernel setup. CUDA is synchronized before and after complete
generation from prepared inputs to CPU-available tokens. Request-specific
controller construction, dynamic programming,
ancestral sampling, transfers, and online model calls are included; independent
evaluation is excluded. External output batches and precision follow each
frozen implementation. Ratios are interpreted only within a matched GPU environment, precision,
external batch, and timer boundary; input or cohort differences do not
by themselves establish an algorithmic speed difference.

\begin{table}[H]
\centering
\scriptsize
\setlength{\tabcolsep}{2.5pt}
\renewcommand{\arraystretch}{1.07}
\caption{Complete online generation time in seconds for 64 final outputs at
each task's frozen Full budget. Base is a compute reference; bold identifies
the fastest measured non-Base baseline within a GPU environment. BoN-4 is excluded.
Static model loading, warmup, disk output, and independent evaluation are
excluded. Per-request controller construction, inference, sampling, transfers,
and actual extra model calls are included; early stopping is retained.
$^{\dagger}$Measurements from a separate cloud GPU environment are shown for
context and must not be divided by primary RTX 4090 measurements in the same
cell. $^{\ddagger}$This D-CBG setting
failed its predeclared DEV PPL cap; its measured time remains visible.
The RTP/LLaDA DG-TAG and GILC ranges reverse order across repeats and use
different native external batches. The LLaDA/Sudoku MDM range includes a
different-card repeat; the same-GPU comparison with \method{} uses 72.364.
$^{\rm w}$Reused static Graph, with complete generation timed;
$^{\rm f}$first request including Graph capture. Times with different native
batches, cohorts, or software stacks are descriptive unless the measurement
conditions are matched.}\label{tab:full-generation-cost}
\begin{tabular}{@{}lrrrrrrrr@{}}
\toprule
Task / backbone & Base & CDD & CDM & D-CBG & DG-TAG & GILC-DB & MDM-VGB & \method{} \\
\midrule
Dyck / Dream & 6.249 & 96.966 & \textbf{51.076} & 82.366 & 121.766 & 443.709 & 406.640 & 13.940 \\
Dyck / LLaDA & 6.237 & 107.607 & \textbf{76.587} & 107.587 & 136.590 & 469.933 & 389.544 & 12.086 \\
Sudoku / Dream & 50.592 & 105.776 & 67.333 & 95.105 & 83.739 & 111.114 & \textbf{53.027} & 56.295 \\
Sudoku / LLaDA & 69.252 & 110.038 & 81.719 & 110.513 & 101.191 & 112.844 & \textbf{71.603--72.364} & 75.136 \\
CommonGen / Dream & 20.121 & 38.690 & 111.330 & \textbf{20.272}$^{\ddagger}$ & 80.754 & 30.454--30.649 & 624.670 & 109.432 \\
CommonGen / LLaDA & 15.428 & 68.535 & 152.613 & 79.558 & 76.392 & \textbf{62.749} & 590.824 & 65.442 \\
RTP / Dream & 21.041 & 116.940 & 106.992 & 719.688 & 131.728 & \textbf{21.094} & 590.266 & 76.466 \\
RTP / LLaDA & 20.596 & 129.462 & 1098.473 & 102.751 & \textbf{93.482--93.703} & \textbf{92.419--96.943} & 821.833 & 154.703 \\
\midrule
DeepSTARR / D3LM & 0.917$^{\dagger}$/1.210 & \textbf{0.968$^{\dagger}$/1.254} & 6.996$^{\dagger}$ & 146.770$^{\dagger}$ & 31.984$^{\dagger}$ & 26.463$^{\dagger}$ & 17.800$^{\dagger}$ & 4.212$^{\rm w}$ \\
APARENT / D3LM & 1.200 & 33.774 & 10.311 & 58.430 & \textbf{2.563--2.579} & 18.290--18.436 & 4.105 & 0.977$^{\rm w}$ \\
K562 / D3LM & 0.917$^{\dagger}$/1.205 & \textbf{1.536$^{\dagger}$/2.013} & 6.593$^{\dagger}$ & 4.542$^{\dagger}$/5.108 & 31.926$^{\dagger}$ & 19.861$^{\dagger}$ & 21.745$^{\dagger}$ & 0.985$^{\rm w}$ \\
Protein-1YCR / EvoDiff & 36.390 & \textbf{38.790} & 292.733 & 38.831 & 39.588 & 49.397 & 59.609 & 16.484$^{\rm f}$ \\
\bottomrule
\end{tabular}
\end{table}

\subsection{What does and does not need a convergence theorem}

The backward pass is a finite acyclic recursion, not an iterative fixed-point
algorithm: after exactly $L$ levels it has computed the partition and all
suffix masses on the retained graph.  Accordingly, its relevant theorem is
exactness and finite complexity, not asymptotic convergence.

For a fixed selected feature design, the scorer objective in
Eq.~(\ref{eq:scorer-objective}) is convex.  This follows because binary logistic
loss is convex in its affine logit, while the $\ell_1$ and squared $\ell_2$
penalties are convex.  If both target classes occur and the token and AC
coefficient blocks have positive $\ell_2$ penalties, the objective is coercive
and has at least one finite minimizer; it is strongly convex in the regularized
coefficient subspace.  These facts justify a global rather than local fitting
target.

The implementation reports numerical optimizer termination, iterations, and
held-out metrics. We do not claim a new optimizer rate for its mixed
$\ell_1/\ell_2$ objective; such a rate would require a specified proximal
solver and optimality residual, or a smooth reformulation. Likewise, carrier
training and the frozen diffusion model are empirical estimation procedures;
their diagnostics do not imply convergence to a global population optimum.
Tab.~\ref{tab:convergence-boundaries} summarizes these claim boundaries.

\begin{table}[h]
\centering
\small
\setlength{\tabcolsep}{4pt}
\caption{Claim boundaries for ``convergence.''}
\label{tab:convergence-boundaries}
\begin{tabular}{p{0.25\linewidth}p{0.28\linewidth}p{0.31\linewidth}}
\toprule
Component & Formal statement & Empirical evidence \\
\midrule
Backward DP & finite exact recursion & graph/support sizes \\
Ancestral replay & exact at $\tau_a=1$ & sampling rule and seed \\
Scorer fit & convex fixed-design objective & solver status and validation metrics \\
Carrier fit & no global-optimum claim & epoch/loss/convergence flag \\
Diffusion host & frozen & checkpoint and schedule \\
\bottomrule
\end{tabular}
\end{table}

\section{Carrier and Objective Construction}
\label{app:reward-training}

\subsection{Carrier assets and scope}

The carrier is part of every query, but its realization is asset specific. A
structured HMM or Markov carrier contributes target-independent sequence
statistics; a one-state identity carrier is the neutral finite-state instance.
Non-identity carriers follow the same target-free CoDD-style training
procedure~\citep{li2026codd},
with asset-specific data, state size, block length, and frozen checkpoint. The
identity carrier requires no training. The LLaDA language artifact is trained
on WikiText-103~\citep{merity2017pointer}, has 64 latent states, uses block
length 32, and occupies 30.88 MiB when serialized. It is frozen before
task-specific objective fitting, while Dream, D3LM, and Protein use their own
frozen carrier configurations.

\subsection{Mining, selection, and fitting}

Alg.~\ref{alg:reward-construction} separates reusable corpus processing
from target-dependent statistics.  Mining never reads labels; changing the
target reuses its candidate set and count matrices and reruns only selection
and fitting.

\paragraph{Objective-agnostic candidate grammar.}
For training rows $x^{(1)},\ldots,x^{(N)}$, the document frequency of token
pattern $p$ is
\[
\operatorname{df}(p)=\sum_{n=1}^{N}\mathbf{1}[p\text{ occurs in }x^{(n)}].
\]
The miner enumerates literal token $n$-grams of orders one through four and
keeps a pattern only when
$m_{|p|}\leq\operatorname{df}(p)\leq\rho N$.  The default thresholds are
$(m_1,m_2,m_3,m_4)=(8,5,3,3)$ and $\rho=0.70$; deterministic ordering by
decreasing document frequency, order, and token tuple applies the
80,000-candidate resource cap.  Mining reads neither the target labels nor
validation rows.  The stored sparse matrix uses occurrence counts
\[
X_{nj}=c_j(x^{(n)}),
\]
not binary presence, so training and inference agree when a pattern repeats or
overlaps itself.
\begin{algorithm}[h]
\caption{Build and compile a count-logit reward}
\label{alg:reward-construction}
\begin{algorithmic}[1]
\Require tokenized training text; training labels; validation text and labels;
document-frequency limits; feature orders $1{:}4$
\Ensure signed token weights, weighted phrase automaton, validation report, and
one decision record per candidate
\State $\mathcal{C} \gets \Call{MineNgrams}{\text{training text},1{:}4,
\text{frequency limits}}$
\State $X_{\mathrm{train}},X_{\mathrm{val}} \gets
\Call{CountOccurrences}{\mathcal{C},\text{training text},\text{validation text}}$
\Comment{no labels used above}
\State $\mathcal{S}\gets\emptyset$
\For{$n=1,\ldots,4$}
  \For{candidate $c\in\mathcal{C}$ of order $n$}
    \State $d_c\gets\Call{ResidualizedLogisticTest}{X_{\mathrm{train}}[:,c],
    \text{training labels},\mathcal{S}}$
  \EndFor
  \State apply within-order false discovery rate (FDR) correction to all $d_c$
  \State $\mathcal{A}_n\gets\Call{ApplyGatesAndCaps}{\{d_c:|c|=n\},\mathcal{S}}$
  \Comment{stability, artifact, heredity, and child-over-parent gates}
  \State $\mathcal{S}\gets\mathcal{S}\cup\mathcal{A}_n$
\EndFor
\State $(b,u,w)\gets\Call{FitRegularizedLogistic}{X_{\mathrm{train}}[:,\mathcal{S}],
\text{training labels}}$
\State report $\gets\Call{EvaluateOnly}{b,u,w,X_{\mathrm{val}},
\text{validation labels}}$
\State automaton $\gets\Call{CompileAhoCorasick}{\mathcal{S},w}$
\State \Return $(b,u,\text{automaton},\text{report},\{d_c:c\in\mathcal{C}\})$
\end{algorithmic}
\end{algorithm}

\paragraph{Residualized short-to-long score test.}
At feature order $k$, let $H_k$ contain an intercept, five label-free text
diagnostics (log length, unique-token fraction, repeated-bigram fraction,
maximum-run fraction, and punctuation fraction), and the features accepted at
orders below $k$.  A class-weighted ridge logistic null fit gives probabilities
$p$, residual $r=\omega\odot(y-p)$, and
$D=\operatorname{diag}(\omega\odot p\odot(1-p))$.  For candidate count column
$c_j$, the residualized score and information are
\begin{align}
U_j &= c_j^\top r,\\
I_j &= c_j^\top D c_j
-c_j^\top D H_k(H_k^\top D H_k+\gamma P)^{-1}H_k^\top D c_j,\\
Z_j &= U_j/\sqrt{I_j},
\label{eq:feature-score-test}
\end{align}
where $P$ leaves the intercept unpenalized.  We convert $|Z_j|$ to a two-sided
normal $p$-value; $\gamma$ is the null-fit ridge strength.  We apply
Benjamini--Hochberg correction within each order
\citep{benjamini1995controlling}.

A candidate then passes five transparent gates: corrected $q\leq0.05$ and
$|Z_j|\geq2$; absolute correlation at most $0.98$ with every text diagnostic;
at least one selected prefix or suffix parent for orders above one; at least a
$2\%$ gain over the strongest same-sign parent; and matching sign plus
significance in at least four of five stratified $70\%$ training subsamples.
Survivors are ordered by corrected $q$, $|Z|$, document frequency, and token
tuple before per-order and global resource caps are applied.  The accepted
columns are appended to $H_k$ before testing order $k+1$; the readable
reference defaults cap each order at 64 and the complete selection at 256.
Every candidate has a decision record containing its statistic and first
rejection reason.
Training and inference both count every occurrence, including overlaps.
Validation data report the area under the receiver operating characteristic
curve (AUC), balanced accuracy, and log loss but never enter the fit or these
gates.

We use these gates as a reproducible screening procedure. The paper does not
claim that the final, adaptively constructed dictionary has a finite-sample
5\% false-discovery-rate guarantee: such a claim would additionally require
the calibration and dependence assumptions of the individual score tests and
the sequential selection procedure.

With binary training targets $y_n$, token coefficients $u$, selected-pattern
coefficients $w$, and count-semantic feature values $c_j$, the fitted scorer
uses binary cross-entropy (BCE),
\[
R_\phi(x)=b+\sum_i u_{x_i}+\sum_j w_jc_j(x)
\]
and minimizes
\begin{align}
\mathcal{L}(\phi)={}&\frac{1}{N}\sum_{n=1}^{N}
\operatorname{BCE}\!\left(y_n,\sigma(R_\phi(x^{(n)}))\right)
\nonumber\\
&+\alpha_u\lVert u\rVert_2^2
+\alpha_1\lVert w\rVert_1
+\alpha_2\lVert w\rVert_2^2.
\label{eq:scorer-objective}
\end{align}
The sigmoid belongs to the supervised fit and to diagnostic probabilities.
The generation-time factor uses the signed energy $R_\phi$ itself, as stated
in Eq.~(\ref{eq:guided-law}).

Eq.~\ref{eq:scorer-objective} is the logistic special case, not a universal
training law. RTP uses a backbone- and cohort-bound binary or soft-toxicity
asset. Malinois uses a target-highest binary objective with reverse-complement
tied sixmer features. DeepSTARR and APARENT use their frozen asset-specific
regression objectives. The exact feature grammar, labels, loss, regularization,
and coefficients are task specific rather than inferred from the task name.
CommonGen trains no scorer: its terminal
$0/-\infty$ DFA is fixed by each record. Offline scorer accuracy is diagnostic;
only frozen generation followed by the independent task evaluator supports a
result claim.

\subsection{Task-specific controller construction}
\label{app:task-controllers}

The inference engine sees only the finite-state interface, but the compiler
and scored text scope are task specific.  Tab.~\ref{tab:task-controllers}
states the complete mapping.  The learned controllers are weighted
deterministic automata, not hard accept/reject DFAs; CommonGen is the terminal
hard-DFA instance.

\begin{table}[H]
\centering
\footnotesize
\setlength{\tabcolsep}{4pt}
\renewcommand{\arraystretch}{1.08}
\caption{How each task becomes a controller. Literal feature mining is shared
in form but labels and fitted weights are task specific. CommonGen, Dyck, and
Sudoku have no learned count-logit reward. Protein bigrams are chronological
action-order features, not adjacent residues in the final sequence.}
\label{tab:task-controllers}
\begin{tabularx}{\textwidth}{@{}>{\raggedright\arraybackslash}p{0.13\linewidth}>{\raggedright\arraybackslash}p{0.18\linewidth}>{\raggedright\arraybackslash}X>{\raggedright\arraybackslash}p{0.21\linewidth}@{}}
\toprule
Task & Scored scope & Automatic controller construction & Terminal rule \\
\midrule
RTP & generated continuation only & Mine and select train-only count features for
$\mathbf{1}[\text{continuation toxicity}<0.2]$; compile signed token terms and
selected phrase weights into one weighted AC machine. & zero for every state \\
\shortstack[l]{Common\\Gen} & generated sentence & For each record, enumerate standalone base,
regular-plural, and possessive forms; tokenize them and compile the reachable
lexical-prefix state plus a satisfied-concept bit mask. & $0$ iff every required
concept is complete and no forbidden concept is hit; $-\infty$ otherwise \\
Dyck repair & 12 locked prefix tokens plus 20 repaired suffix tokens & Compile a
bounded-stack graph, intersect forward/backward min-plus optima, then attach
frozen evidence and carrier weights. & empty stack at length 32 and exact
minimum substitution distance \\
DeepSTARR & complete 48-token DNA sequence & Compile the frozen order-4
occurrence-count energy into token and automaton terms. & zero for every state \\
APARENT & complete 48-token DNA sequence & Compile the frozen token-unigram
regression energy. & zero for every state \\
Malinois & complete 48-token DNA sequence & Compile the target-specific
reverse-complement-tied sixmer/token-unigram energy. & zero for every state \\
Sudoku & editable cells with clues locked & Enumerate legal completions under
the standard row, column, and box rules, then build a layered joint-support
graph over the scheduled cells. The objective state records the retained
completion prefix. & one iff the terminal prefix is a declared legal
completion; zero otherwise \\
Protein-1YCR & chronological editable-action sequence with motif locks &
Compile 20 unigram, 400 action-order bigram, and 320 action-position weights
fitted on partial action prefixes; replay uses the same editable action order
and full 20-amino-acid support. & zero for every state \\
\bottomrule
\end{tabularx}
\end{table}

\begin{algorithm}[h]
\caption{Compile a task-specific controller}
\label{alg:task-controller}
\begin{algorithmic}[1]
\Require task $t$; task record; tokenizer; declared candidate support;
selected patterns and fitted weights when $t$ is learned
\Ensure token energies plus a finite-state controller
\If{$t\in\{\mathrm{RTP},\mathrm{DeepSTARR},\mathrm{APARENT},\mathrm{Malinois}\}$}
  \State place every fitted unigram weight in the token-energy table
  \State insert every selected phrase and its signed weight into a token trie
  \State add failure links and propagate terminal weights along those links
  \State \Return token energies and the resulting weighted AC controller
\EndIf
\If{$t=\mathrm{CommonGen}$}
  \For{concept $c$ in the record}
    \State $V_c\gets\Call{StandaloneForms}{c,\text{base, plural, possessive}}$
    \State $T_c\gets\Call{TokenizeWithBoundaries}{V_c,\text{tokenizer}}$
  \EndFor
  \State build a prefix matcher over $\bigcup_c T_c$
  \State initial DFA state $\gets$(empty active prefixes, zero concept mask,
  not rejected)
  \State breadth-first enumerate transitions on the declared support; update
  active prefixes, completed-concept bits, and the reject flag
  \State terminal score $\gets0$ iff all required bits are set and not rejected;
  $-\infty$ otherwise
  \State \Return zero token energies and the reachable hard DFA
\EndIf
\State \Return \Call{UnsupportedTask}{t}
\end{algorithmic}
\end{algorithm}

For learned tasks, fixed context is consumed before the scored scope to obtain
the carrier and automaton boundary state, and rewards are accumulated only on
the declared continuation or biological output sequence. For CommonGen, word
boundary handling belongs to the tokenizer adapter. The same accepted token
strings define both generation and exact evaluation. This
prevents evaluator-only morphology rules from silently differing from the DFA
used during generation.

\section{Experimental Setup and Evaluation}
\label{app:data-evaluation}

Tab.~\ref{tab:dataset-contracts} summarizes the evaluation contracts.

\begin{table*}[h]
\centering
\small
\setlength{\tabcolsep}{2pt}
\caption{Task and evaluator contracts.  The scorer used for guidance is not the
sole evaluator.}
\label{tab:dataset-contracts}
\begin{tabular}{p{0.12\linewidth}p{0.20\linewidth}p{0.17\linewidth}p{0.20\linewidth}p{0.22\linewidth}}
\toprule
Task & Generation problem & Target / constraint & Primary control metric & Quality and audit metrics \\
\midrule
RTP & prompt continuation & non-toxic continuation & ToxicBERT continuation toxicity and satisfaction & prompt-conditioned GPT-2-large corpus PPL \\
\shortstack[l]{Common\\Gen} & concept realization & terminal lexical DFA acceptance & fail-closed all-concept satisfaction & exact/normalized coverage; forbidden hits; unreachable support; conditional PPL \\
Dyck & fixed-length repair & valid minimum-edit completion & exact validity & all-output Hamming edit and legal-edge rate \\
Sudoku & clue-preserving completion & valid $9\times9$ grid & exact validity & clue preservation and output format \\
DeepSTARR & unconditional DNA generation & developmental activity & official DeepSTARR H5 predictor & native masked LOO and failure count \\
APARENT & unconditional DNA generation & isoform usage & official APARENT H5 predictor & native masked LOO and diversity diagnostics \\
Malinois & cell-type-specific DNA generation & target score exceeds both non-target scores & target-highest rate & target margin, native masked LOO, and failure count \\
Protein-1YCR & motif-scaffold completion & low motif RMSD after folding & Success@1\AA{} & mean motif RMSD, evaluator coverage, and model/oracle work \\
\bottomrule
\end{tabular}
\end{table*}

The split law is uniform: fitted components use TRAIN only, hyperparameters use
DEV only, and a frozen configuration is evaluated once on its declared final
panel. CommonGen uses 993 DEV records and 1,497 disjoint TEST records. Dyck uses
1,024 DEV inputs and 9,798 evaluation inputs. Sudoku uses 64 development
puzzles from PRISM training solutions and its released 2,000-puzzle hard
validation file~\citep{kim2025fine}, as redistributed by
MDM-VGB~\citep{jeon2026mdmvgb}; we do not call it an independently certified
test split.

The final panels are fixed benchmark cohorts. Some were evaluated earlier in
this study and are therefore not new prospective holdouts. For each reported
system, fitting uses TRAIN data and configuration selection uses DEV data; no
final-panel metric selects its configuration. The stated source disjointness
concerns these experimental records and does not certify that the pretrained
backbone never encountered similar data.

RTP is a shared stress benchmark rather than a population estimate: its 1,024
prompts are selected by toxicity of a frozen diffusion-only continuation, and
the identical ordered prompt IDs are then used for every method. No guided
method defines its own panel. DNA experiments use 1,000 unconditional draws per
displayed system. These are independent frozen-system samples with a common
backbone, output contract, predictor, and sample count; they are not paired
molecules and do not imply unseen biological generalization.

\subsection{Metric computation}
\label{app:metric-computation}

For $N$ outputs, hard satisfaction is $N^{-1}\sum_n\mathbf 1[V(x^{(n)})]$.
Sudoku validity $V$ requires correct output format, clue preservation, and all
row, column, and box rules. Its separate clue score is the fraction of outputs
preserving every given clue. CommonGen validity requires every requested concept
under the accepted-form and word-boundary rules. Dyck validity checks bracket
balance and type matching, while mean edit distance is the symbol-level Hamming
distance to the input averaged over all outputs, including invalid ones.
Minimum-edit optimality is verified separately against the exact repair cost.

RTP non-toxicity is $N^{-1}\sum_n\mathbf 1[t_n<0.5]$, where $t_n$ is the
generated continuation's \nolinkurl{unitary/toxic-bert@4d6c22e74ba2}
score~\citep{hanu2020detoxify}. Prompt-conditioned
\nolinkurl{openai-community/gpt2-large@32b71b12589c} corpus
perplexity~\citep{radford2019language} is
$\exp(\sum_n\mathrm{NLL}_n/\sum_n T_n)$, where NLL denotes negative
log-likelihood and $T_n$ counts scored output tokens. The prompt supplies
context but contributes neither loss nor tokens to that denominator. Corpus
PPL aggregates token losses before exponentiation.

For DNA, the declared task crop is scored by the official predictor.
DeepSTARR and APARENT report the mean target output over successfully scored
sequences, with scoring failures recorded separately. DeepSTARR Pass@95 is the
fraction of all generated sequences reaching the fixed activity threshold. Malinois target-highest rate
is the fraction whose target-cell score strictly exceeds both other cell
scores. Native D3LM masked LOO masks one token at a time, records the original
token's negative log probability with all other tokens visible, averages those
losses over all scored tokens, and exponentiates. It is a pseudo-perplexity,
not a normalized joint likelihood or a target-activity metric.

Protein motif RMSD compares the declared motif atoms in the OmegaFold structure
with the reference under the evaluator's alignment. Mean RMSD averages these
per-output distances. Success@1\AA{} is
$N^{-1}\sum_n\mathbf 1[\mathrm{RMSD}_n\leq1\,\text{\AA}]$.
Satisfaction, isoform usage, target-highest, and success rates are displayed as
percentages in the main tables.

The remaining setup details are organized in
App.~\ref{app:prompt-contracts} (inputs and prompts),
App.~\ref{app:baselines} (baseline adaptations), and
App.~\ref{app:experimental-protocol} (generation and selection).

\subsection{Exact generation prompt contracts}
\label{app:prompt-contracts}

We freeze prompts as part of the experimental condition.  Diffusion-only,
carrier-only, learned guidance, and other algorithmic baselines receive the
\emph{formal task input} below.  The prompt-only baseline receives the same
record, start canvas, and generation budget, but applies the corresponding
\emph{explicit-control wrapper}.  Prompt-only is therefore a named baseline,
not a point in the guided-temperature sweep. Braces below denote fields filled
from the frozen dataset record. Free continuations begin after the displayed
suffix.

\subsubsection{RealToxicityPrompts (RTP)}

For diffusion-only and all algorithmic guidance methods, the model receives
the dataset prefix verbatim:

\begin{quote}
\small\ttfamily
\{prefix\}
\end{quote}

The matched prompt-only baseline uses the frozen
\texttt{rtp\_prompt\_only\_v2} template:

\begin{quote}
\small\ttfamily\sloppy
Continue the text naturally and coherently using non-toxic language. Output\par
only new text and do not repeat the given prefix.\\[0.6em]
Text: \{prefix\}
\end{quote}

Only the newly generated continuation is passed to the toxicity evaluator;
conditional perplexity is computed with the original prefix as context.

\subsubsection{CommonGen}

Every method receives the same concept-conditioned task prompt:

\begin{quote}
\small\ttfamily\sloppy
Write one short sentence using all these words: \{concept$_1$\},
\ldots, \{concept$_m$\}.\\
Sentence:
\end{quote}

This prompt states the benchmark input rather than an additional soft-control
instruction.  We therefore do not report a separate prompt-only CommonGen
baseline.  Exact control is defined by the record-conditioned terminal DFA,
and acceptance requires every requested concept in the generated sentence.

\subsubsection{Structured and biological inputs}

Dyck, Sudoku, and the biological tasks use structured records rather than an
additional natural-language control prompt. Dyck methods receive the same
32-token repair record with the first 12 tokens locked; Sudoku methods receive
the same puzzle and clue mask. DNA methods share the D3LM backbone, 48-token
output contract, target identity, and evaluator but draw independent
unconditional samples. Thus language and symbolic inputs are row-matched,
whereas the biological comparison is distributional rather than molecule-paired.
For the 48-token DNA contract, Tab.~\ref{tab:biology} includes a
$b=48$ full-budget anchor with independently frozen method-specific DEV
selections. This uses the same masked host and records 48 total token commits
per trajectory; the native schedule can still contain zero- or multi-token
steps. BoN and particle methods retain their additional trajectory costs.
The confirmation scope, retained fixed-panel evaluations, and mode collapse
are summarized in App.~\ref{app:complete-results}. The main table displays the
K562 $b=48$ target-specific result. HepG2 and SK-N-SH are not part of the main
endpoint comparison and are therefore omitted from that table; HepG2 appears
only in the mechanism diagnostic in Fig.~\ref{fig:guidance-mechanism}A.
Language Full uses a 32-token chunk at $b=32$, and Dyck may finish its 20
editable positions within that allocation. Sudoku Full uses $b=64$ and permits
at most one new cell per update, with 10--53 realized calls on the panel in
Tab.~\ref{tab:sudoku-multisolution}.

\section{Baselines and Hyperparameter Selection}
\label{app:baselines}

Algorithmic sources are CDD~\citep{cardei2025cdd}, CDM~\citep{kim2026cdm},
D-CBG~\citep{schiff2025simple}, DG/TAG~\citep{nisonoff2025guidance},
GILC~\citep{dou2026gilc}, and MDM-VGB~\citep{jeon2026mdmvgb}.
Tab.~\ref{tab:baseline-contracts} summarizes each port.

\begin{table*}[h]
\centering
\scriptsize
\setlength{\tabcolsep}{3pt}
\caption{Executable baseline recipes. ``Port'' means our implementation of the
published algorithmic idea on the frozen host backbone; it does not claim
author-code parity.}
\label{tab:baseline-contracts}
\begin{tabular}{p{0.12\linewidth}p{0.15\linewidth}p{0.24\linewidth}p{0.24\linewidth}p{0.17\linewidth}}
\toprule
Method & Implementation & Fitted object and supervision & Online rule & Reported compute \\
\midrule
Base & host implementation & none & native reverse diffusion & $b$ reverse updates \\
CDD & same-backbone port & train-only projection expert & top-$k$ projection with augmented-Lagrangian inner updates & $b$ plus inner-loop count $I$ \\
CDM & sequential Monte Carlo (SMC) port & train-only twist & $P$ particles with fixed proposal, effective sample size (ESS) threshold, and resampling law & task-dependent particle work; see below \\
D-CBG & same-backbone port & train-only corrupted-state classifier & classifier reweights token alternatives; DNA uses the declared full-vocabulary Taylor form & $b$ updates and candidate width \\
DG-TAG & method-inspired port & frozen task predictor & rescore top-$k$ successor states and update logits & $b$ updates and predictor calls \\
GILC-DB & adapted port & train-only task reward model & Monte-Carlo logit correction at every reverse update & $b$ reverse updates; MC samples reported separately \\
MDM-VGB & task-specific same-backbone ports & train-only verifier & LLaDA/Dyck uses native reveal/remask backtracking; other tasks retain their documented forward-value adaptations & $b$ denoising steps, with realized search moves and verifier calls disclosed separately \\
\method{} & proposed method & train-only additive reward plus target-free carrier & exact backward messages and ancestral replay on the retained product graph & $b$ backbone updates plus DP FLOPs \\
\bottomrule
\end{tabular}
\end{table*}

For the new multi-solution Sudoku adaptation, CDM particles operate inside
exact feasible-completion guidance. Its denoising-step budget counts host
updates, while particle work remains a separate compute coordinate.
Full-trajectory particle methods, including
DNA and Protein CDM, instead incur $Pb$ backbone row-forwards.

For conditional language and symbolic tasks, methods receive identical ordered
row IDs, checkpoint, prompt or structured record, output length, evaluator, and
reverse-step budget. Biological systems use independent unconditional draws
under a common backbone, output contract, predictor, and sample count; we do
not describe them as paired. Particle count, MC samples, search width,
best-of-$N$, and realized NFE remain separate compute coordinates rather than
being hidden inside $b$. The allocated denoising budget, realized committed-token
count, and realized backbone NFE are distinct quantities. A global 1:1 point denotes
token-complete masked denoising under the same host; no experiment here changes
the host into a block-diffusion model.

To contextualize the denoising-budget comparison, Tab.~\ref{tab:illustrative-flops}
estimates backbone arithmetic for Base and \method{} on the same 1,497
LLaDA/CommonGen prompts in separate efficiency runs.
\begin{table}[!htbp]
\centering
\small
\setlength{\tabcolsep}{5pt}
\caption{Estimated backbone TFLOPs per output in the LLaDA/CommonGen
efficiency runs. We use $F\approx2PS$ per forward, nominal
$P=8\times10^9$, $b$ observed forwards, and mean padded lengths $S=54.50$
(Base) and $53.20$ (\method{}). The length difference reflects batching.
They exclude \method{}'s dynamic programming and are neither profiled nor
total FLOPs; other tasks require their own sequence lengths and model-call
counts.}
\label{tab:illustrative-flops}
\begin{tabular}{@{}rcc@{}}
\toprule
$b$ & Base & \method{} \\
\midrule
$8$ & 6.976 & 6.809 \\
$32$ & 27.903 & 27.238 \\
\bottomrule
\end{tabular}
\end{table}

\subsection{Shared protocol}
\label{app:experimental-protocol}

\subsubsection{Frozen model and inference settings}

Tab.~\ref{tab:runtime-contract} gives the language defaults before task-specific selection.

\begin{table}[h]
\centering
\small
\setlength{\tabcolsep}{4pt}
\caption{Default COFFEE language configuration before benchmark-specific DEV
selection. Temperatures and $\lambda$ are independent controls.}
\label{tab:runtime-contract}
\begin{tabular}{lr}
\toprule
Field & COFFEE language default \\
\midrule
Diffusion checkpoint & LLaDA-8B-Base \\
Generation / block length & 32 / 32 \\
Denoising budget & 32 \\
Evidence temperature $\tau_e$ & 1.0 \\
Carrier-emission temperature $\tau_m$ & 1.0 \\
Ancestral temperature $\tau_a$ & 1.0 \\
Final host sampling temperature & 0.0 \\
Reward strength $\lambda$ & 1.0 \\
Initial / maximum token top-$k$ & 16 / 64 \\
Other-mass expansion threshold & 0.55 \\
State beam & 32 \\
Guided fraction & all denoising updates \\
Carrier states / vocabulary & 64 / 126,346 \\
\bottomrule
\end{tabular}
\end{table}

This is a language default, not a universal recipe. Dream, D3LM,
Protein, and task-specific controller settings follow the development-time
search and selection rules in Tabs.~\ref{tab:appendix-selection-contracts}
and~\ref{tab:appendix-search-axes}. The carrier construction is described in
App.~\ref{app:reward-training}.

\subsubsection{Selection and statistical reporting}

Selection is performed independently inside each denoising budget. Language
tasks first maximize the declared control metric subject to the same-budget
quality rule, then use the deterministic tie-break in
Tab.~\ref{tab:appendix-selection-contracts}. Dyck and Sudoku select by exact
validity before secondary edit or clue metrics. Biological ports select on the
declared DEV predictor while retaining native LOO as a separately reported
trade-off; no TEST quantity selects a configuration.

For CommonGen and RTP, the common soft-language quality rule compares
prompt-conditioned corpus PPL on DEV with the unguided Base for the same
backbone, task, and budget:
\[
\mathrm{PPL}^{\mathrm{DEV}}_{\mathrm{candidate}}
\leq 1.5\,\mathrm{PPL}^{\mathrm{DEV}}_{\mathrm{Base}}.
\]
Among candidates satisfying this common cap, selection maximizes the task's
control metric. If none satisfies it, the lowest-PPL point on the DEV frontier
is frozen without relaxing the cap. Its final-panel control and quality
measurements are still reported for matched systems, but the point cannot
support a claim that requires satisfying the cap. PPL is a quality proxy and
does not by itself rule out repetitive or degenerate outputs.

Every displayed cell is one frozen system evaluated on its declared panel.
Success rates use all planned outputs as their denominator. A continuous
quantity that is undefined for a failed output is summarized only over
evaluable outputs. The main table reports claim-facing metrics, while
App.~\ref{app:complete-results} records only the failure or coverage
qualifications needed to interpret those claims. A dash denotes an unavailable
or inapplicable measurement rather than zero.
Paired intervals are used only when systems share the same input rows and a
paired resampling unit; independent DNA draws are not assigned paired
intervals. Small numerical differences on unpaired panels are treated as
descriptive rather than as significance claims.

\begin{table}[!htbp]
\centering
\small
\setlength{\tabcolsep}{3pt}
\caption{Selection contracts for the results discussed in the main paper.
No final-panel metric is used to choose a configuration.}
\label{tab:appendix-selection-contracts}
\begin{tabularx}{\textwidth}{@{}p{0.12\linewidth}p{0.17\linewidth}p{0.20\linewidth}X@{}}
\toprule
Benchmark & Development data & Final panel & Frozen selection rule and interpretation \\
\midrule
CommonGen & 993 records, with 992 in the corrected common pool & 1,497 disjoint records & Maximize exact lexical acceptance under the common PPL cap, followed by coverage, PPL, and strength tie breaks. \\
RTP & Method-independent development panel & Common 1,024-prompt stress panel selected by Base toxicity & Maximize non-toxicity under the same-budget quality rule. Results describe this stress panel rather than population-average RTP behavior. \\
Dyck & 1,024 inputs & 9,798 disjoint inputs & Maximize exact validity, then minimize all-output edit distance. Independent minimum-repair verification is reported only for the audited configurations described in Sec.~\ref{app:dyck-repair-diagnostic}. \\
Sudoku & 64 puzzles constructed from released training solutions & Released 2,000-puzzle hard panel & Maximize validity, then clue preservation, format, model work, and grid order. The upstream panel is named validation and is not described as an independently certified test set. \\
DNA & Task-specific frozen development systems & 1,000 outputs per frozen system & Select by the declared predictor on development data. Native LOO and diversity remain separate diagnostics. Comparisons are distributional rather than molecule paired. \\
Protein-1YCR & Common development geometries & Common 400-geometry panel & Maximize Success@1\AA{}, then minimize mean motif RMSD and evaluator work. Particle methods retain their additional model and evaluator calls. \\
\bottomrule
\end{tabularx}
\end{table}

\begin{table}[!htbp]
\centering
\small
\setlength{\tabcolsep}{3pt}
\caption{Development-time search axes for the methods shown in the main
comparison. Batch size is an execution choice rather than a selected scientific
parameter. Particle, candidate, and verifier work remain separate from the
denoising-step budget.}
\label{tab:appendix-search-axes}
\begin{tabularx}{\textwidth}{@{}p{0.14\linewidth}X p{0.30\linewidth}@{}}
\toprule
Method & Parameters searched on development data & Computation fixed before final evaluation \\
\midrule
COFFEE & Language evidence, carrier, and ancestral temperatures in $\{0.3,0.6,0.9\}$, followed by reward strengths in $\{0.5,1,2,4,8\}$ for declared representatives. Biological scale and temperature are selected separately for each task. & Carrier, compiled objective, retained support rule, and backbone checkpoint. \\
CDD & Projection strength $\eta\in\{0.2,0.4,0.6,0.8\}$, initial multiplier $\mu_0\in\{1,5,10\}$, and inner iterations $I\in\{10,25,50\}$ where applicable. & Projection expert, candidate width, and outer update schedule. \\
CDM & Twist strength $\beta\in\{0.1,0.2,2\}$ for language, with task-specific biological refinements. & Particle count, proposal, effective-sample-size threshold, and resampling rule. \\
D-CBG & Classifier strength $\gamma\in\{0.5,1,2\}$, with separately selected full-vocabulary scales for Malinois. & Corrupted-state classifier and candidate rule. \\
DG-TAG & Rescoring strength $\gamma\in\{1,5,20\}$, plus the declared $0.1$ CommonGen refinement. & Frozen task predictor and top-$k$ successor set. \\
GILC-DB & Language strength $\beta\in\{0.1,0.3,1,3,10\}$, with task-specific biological scale refinements. & Reward model, Monte Carlo sample count, and candidate width. \\
MDM-VGB & Step-matched language and symbolic strength $\lambda\in\{0,0.05,2,8\}$ and biological strength $\lambda\in\{0.5,2,8\}$. & Frozen verifier and the host schedule capped at the reported budget. \\
\bottomrule
\end{tabularx}
\end{table}

\section{Main-Claim Support and Result Qualifications}
\label{app:complete-results}

Tab.~\ref{tab:main-results} and Fig.~\ref{fig:budget-primary} contain the
claim-facing numerical results. The appendix records only the additional
qualifications needed to interpret those results rather than repeating every
numerical cell already visible in the table and budget curves.

All endpoints in Tab.~\ref{tab:main-results} are frozen final-panel outcomes
and remain numerically ranked on the displayed metrics. Mode collapse and
quality-gate failures remain disclosed through the reported quality and
diversity diagnostics rather than turning those TEST outcomes into report-only
results.

A hollow marker in Fig.~\ref{fig:budget-primary} denotes a point outside the
standard comparison set. The map below lists every such point. Report-only
points are lower-budget archival diagnostics excluded from standard ranking,
while K562 points are separately qualified. No oracle, reference, or
quality-cap-failed point is plotted.

\begingroup
\scriptsize
\setlength{\parindent}{0pt}
\noindent\textit{Report-only points.} CommonGen / Dream, CDD (4), D-CBG (4, 16); RTP / Dream, CDD (16), D-CBG (4), COFFEE (4, 16); DeepSTARR, D-CBG (4), GILC-DB (4, 16); CommonGen / LLaDA, CDD (4), D-CBG (4, 16); RTP / LLaDA, CDD (4, 16), D-CBG (4).\par
\noindent\textit{Separately qualified points.} K562, D-CBG (16), GILC-DB (16).\par
\endgroup

Tab.~\ref{tab:main-claim-support} summarizes the main claim boundaries.

\begin{table}[H]
\centering
\footnotesize
\setlength{\tabcolsep}{3pt}
\caption{Additional evidence and claim boundaries for the results interpreted
in the main text.}
\label{tab:main-claim-support}
\begin{tabularx}{\textwidth}{@{}p{0.13\linewidth}p{0.33\linewidth}X@{}}
\toprule
Main-text result & Additional evidence & Required interpretation \\
\midrule
Dyck repair & The compiler constructs the minimum-substitution valid support before probabilistic replay. On the two independently audited full-panel files, every returned output matches an independently recomputed optimum. & The independent audit covers the stated Dream $b=4$ and LLaDA $b=32$ configurations. It is not a blanket empirical certificate for every budget. \\
Multi-solution Sudoku & COFFEE and the support-aware CDM adaptation preserve validity across the displayed budgets. Sec.~\ref{app:sudoku-joint-diagnostic} separately compares coherent joint replay with independent exact marginals. & Solver-assisted adapters share exact feasible-completion information. Their results test how parallel choices are coupled, not whether one method discovered the Sudoku solution set without a solver. \\
CommonGen and RTP & CommonGen uses a record-conditioned hard automaton. RTP uses a learned soft objective on a common 1,024-prompt stress panel. & Full lexical acceptance and toxicity reduction answer different questions. RTP values are stress-panel measurements and not population estimates. \\
Regulatory DNA & K562 has 32 distinct token sequences among 1,000 COFFEE outputs. DeepSTARR produces one distinct all-T sequence despite favorable activity and native LOO. & Predictor scores and native LOO do not establish diversity, biological function, or experimental efficacy. DNA systems are independent frozen draws rather than paired molecules. \\
Protein-1YCR & Success@1\AA{} and mean motif RMSD are computed on the common 400-geometry panel. & Both metrics use the same OmegaFold evaluation family, one motif, and project adaptations. They are not independent wet-lab validation. \\
\bottomrule
\end{tabularx}
\end{table}

\section{Canonical Algorithms and Failure Semantics}
\label{app:algorithm}

\subsection{The fixed-step compiled update}
\label{app:compiled-update}
The following pseudocode gives the fixed-step inference and sampling
procedure used in Sec.~\ref{sec:backward-dp}. Its factors are fixed
before the backward pass; support and numerical implementation details
are discussed in the surrounding appendix.

\begin{algorithm}[H]
\caption{One \method-guided diffusion update}
\label{alg:coffee-update}
\footnotesize
\begin{algorithmic}[1]
\Require current sequence $x_t$; frozen denoiser; carrier factors $K_{1:L}$;
compiled objective $(\delta_{C,1:L},\psi_{C,1:L},\eta_C)$; host reverse transition
$q_{\mathrm{host}}$
\Ensure next diffusion state $x_{t-1}$
\State Build the clamped unary evidence $e_{t,1:L}$ using
Eq.~\ref{eq:evidence}.
\State Set $M_L(h,a)\gets\eta_C(a)$ for every terminal product state $(h,a)$.
\For{$i=L,L-1,\ldots,1$}
  \State Compute $M_{i-1}$ from $M_i$ using
  Eq.~\ref{eq:product-recurrence}.
\EndFor
\If{$M_0(h_0,a_{\mathrm{init}})=0$}
  \State \Return \Call{NoPositiveMass}{retained support}
\EndIf
\State $(h,a)\gets(h_0,a_{\mathrm{init}})$
\For{$i=1,2,\ldots,L$}
  \State Sample $(u,h')$ with probability proportional to
  \Statex \hspace{1em}$e_{t,i}(u)K_i(h,u,h')\psi_{C,i}(a,u)$
  \Statex \hspace{1em}$\cdot M_i\!\left(h',\delta_{C,i}(a,u)\right)$.
  \State $\widetilde x_0^i\gets u$; \quad
  $(h,a)\gets(h',\delta_{C,i}(a,u))$.
\EndFor
\State Sample $x_{t-1}\sim q_{\mathrm{host}}(x_{t-1}\mid x_t,\widetilde x_0)$
\Statex \hspace{1em}using the host model's original proposal-to-state rule.
\State \Return $x_{t-1}$
\end{algorithmic}
\end{algorithm}

\begin{algorithm}[H]
\caption{Strict minimum-edit repair with frozen evidence}
\label{alg:strict-repair}
\small
\begin{algorithmic}[1]
\Require reference $y$; locked positions $F$; per-position candidates
$C_{1:L}$; bounded-language transition $\delta$; frozen evidence; target-free
carrier; random seed
\Ensure one globally minimum-edit valid repair or a fail-closed record
\State initialize forward min-plus cost $D_0(a_0)=0$ and $D_0(a)=\infty$ otherwise
\For{$i=1,\ldots,L$}
  \State $D_i(a')\gets\min_{a,v:\delta(a,v)=a'}
  [D_{i-1}(a)+\mathbf{1}[v\neq y_i]]$, respecting $F$ and $C_i$
\EndFor
\State run the symmetric backward min-plus recursion to obtain suffix costs
\State $d^\star\gets$ minimum terminal forward cost
\For{every layered edge $(i,a,v,a')$}
  \State retain the edge iff prefix cost $+$ edit cost $+$ suffix cost equals
  $d^\star$
\EndFor
\If{the retained graph has no accepting terminal path}
  \State \Return \Call{NoFeasibleRepair}{declared support}
\EndIf
\State attach frozen evidence and carrier scores to every retained edge
\State run sum-product backward messages on the retained graph
\If{the root message $M_0(h_0,a_0)=0$}
  \State \Return \Call{NoPositiveMass}{retained support}
\EndIf
\State $x\gets\Call{AncestralReplay}{\text{messages},\text{seed}}$
\State \Return $(x,\{i:x_i\neq y_i\},d^\star)$
\end{algorithmic}
\end{algorithm}

The min-plus stage is score free: it determines the exact feasible edit
support. The later sum-product stage changes only the relative probability of
tied minimum-edit repairs. A resource limit returns a separate resource-failure
record rather than dropping states as an unreported beam or being mislabeled
as mathematical infeasibility.

\paragraph{Log-domain implementation.}
The production backend stores $\mathsf m_i(s)=\log M_i(s)$ and replaces every
sum in the backward recurrence by log-sum-exp. At unit ancestral temperature,
the replay logit for edge $e:s\to s'$ is $g_i(e)+\mathsf m_i(s')$; this is
Eq.~\ref{eq:appendix-edge-conditional} in log space and does not introduce
a second message named $B$. Compatible rows are batched without changing the
sampling law.

\paragraph{Failure records.}
The implementation distinguishes an unsatisfiable hard objective on the
declared support, zero positive carrier--objective mass on an otherwise
feasible retained graph, a resource-limit abort during graph construction,
and a numerical failure. Only the first two are mathematical support outcomes;
resource and numerical failures are reported separately and never relabeled as
infeasibility.

\section{Guidance Dynamics and Diagnostic Experiments}
\label{app:qualitative}

\subsection{Multi-solution Sudoku joint consistency}
\label{app:sudoku-joint-diagnostic}
The TRAIN-derived 120-puzzle diagnostic uses one source-disjoint seed and
fixed settings. Joint replay, independent exact marginals, and the joint
reference share the same 64-bit floating-point (FP64) conditional scores,
decoder, and confidence rule; D-CBG uses a different feasibility-derived
confidence source. Joint replay, uniform joint sampling, and the reference all
reach 120/120, while independent marginals fail at smaller budgets. This
supports the need for coherent joint sampling under parallel commits; because
uniform joint sampling also succeeds, it does not establish a validity gain
from learned carrier weighting.
\begin{table}[H]\centering\small
\setlength{\tabcolsep}{5pt}
\caption{Multi-solution Sudoku: joint consistency under parallel commitments.
Entries count valid, clue-preserving outputs out of 120 puzzles; 40 puzzles
have 2, 3--8, and 9--32 solutions, respectively. This is a TRAIN-derived,
source-disjoint exploratory panel with one seed and fixed diagnostic settings,
not the 2,000-puzzle evaluation or a complete-baseline comparison.
Full ($b=64$) permits at most one new cell per step, with 10--53 actual calls.
The FP64 joint reference and independent-marginal row use identical conditional
scores, support, temperature, decoder, and confidence rule; proposals may lead
to different subsequent commit positions. Uniform joint still uses model-based
commit ordering.}
\label{tab:sudoku-multisolution}
\begin{tabular}{@{}lccccc@{}}
\toprule
Method & $b=4$ & $b=8$ & $b=16$ & $b=32$ & \textbf{Full ($b=64$)}\\
\midrule
Joint replay & 120 & 120 & 120 & 120 & 120\\
Independent exact marginals & 68 & 100 & 114 & 118 & 120\\
Uniform joint & 120 & 120 & 120 & 120 & 120\\
D-CBG & 13 & 40 & 71 & 113 & 120\\
\midrule
Joint reference & 120 & 120 & 120 & 120 & 120\\
\midrule
Maximum cells committed per step & 14 & 7 & 4 & 2 & 1\\
\bottomrule
\end{tabular}
\end{table}

\subsection{Minimum-edit repair development diagnostic}
\label{app:dyck-repair-diagnostic}
The diagnostic fixes a 12-token valid source prefix, repairs a 20-token suffix,
and uses four bracket tokens with a bounded Dyck stack. On 9,877 invalid
development leaves, every returned sequence is valid and globally minimum-edit
within the declared support; mean edit distance is 8.283 and masked-suffix
pseudo-perplexity is 12.27. A matched MDM-VGB port repairs 28.41\% of rows;
its valid subset has mean edit distance 10.57 and pseudo-perplexity 10.21.
These are shared-LLaDA DEV diagnostics, not held-out evidence or a reproduction
of the original MDM-VGB experiment.

\subsection{Guidance dynamics and frozen-query diagnostics}
\label{app:guidance-diagnostics}

Panel A of Fig.~\ref{fig:guidance-mechanism} uses 64 shared HepG2 IDs under
the same identity carrier, host schedule, and retained-support rule. Its
ribbons count transitions of each rollout's complete clean proposal between
displayed updates; they are not probability flow. The scorer-on and
$W_C\equiv1$ arms therefore describe proposal trajectories under two frozen
systems rather than an intervention that isolates cross-position lookahead.

Panels B--C use nine distinct frozen K562 $b=4$ queries. For a fixed
conditioning state $z=(h,a,x_0^{<i})$, the Full and Local action
distributions share denoiser evidence, carrier factors, current objective
weight, and retained support. The Full conditional uses the target-weighted
continuation message $M_i$, whereas Local replaces only that message with
the neutral continuation mass $B_i$. Panel B selects one query at position
25 and one prefix product state, then varies the future objective horizon
$d$: the objective factors at positions $i+1,\ldots,i+d$ are included,
while those after $i+d$ are set to one. The entire 48-token lattice,
including suffix denoiser evidence, carrier factors, candidate support, and
automaton state transitions, remains intact. Thus $d=0$ reproduces Local
and $d=23$ reproduces Full for the same fixed prefix. The intermediate
points are controlled recalculations, not diffusion-time updates. In this
query, the probability of \texttt{AAAAAA} is 0.771 at $d=0$ or 1,
0.643 at $d=2$, and 0.500 from $d=3$ through 23. Positions 27 and 28
are observed \texttt{AAAAAA} tokens in the frozen state; enabling their
objective factors changes the current action odds by factors
$\exp(-0.623)$ and $\exp(-0.588)$, respectively. Their product is 0.298.

For panel C, let $\mu_i^0(z)$ be the normalized occupancy under exact
neutral-prefix replay. The position-level statistic is
\begin{equation}
\overline{\operatorname{TV}}_i
=\sum_z\mu_i^0(z)\frac12\sum_u
\left|p_{\mathrm{Full}}(u\mid z)-p_{\mathrm{Local}}(u\mid z)\right|.
\label{eq:k562-prefix-tv}
\end{equation}
Panel C plots the empirical CDF of this prefix-conditional TV averaged over
neutral-prefix occupancy; it does not first mix the two action distributions
and then compute TV. The highlighted position is the largest variation among
these nine queries and was selected only for illustration. Its
occupancy-weighted TV is 0.250, whereas panel B uses one selected prefix with
conditional TV 0.270. The displayed 19/224 count is descriptive. Here future
refers to positions not yet sampled in the same clean reconstruction, rather
than later diffusion steps; this diagnostic establishes no external K562
benefit.

Qualitative examples are included only when the matched diffusion-only output,
strongest eligible baseline output, guided output, independent evaluator score,
and internal scorer trace are all available for the same record. Examples do
not substitute for aggregate results and are not selected by TEST score alone.

\subsection{CommonGen completion weighting and joint sampling}
\label{app:commongen-components}

This diagnostic reuses 256 official validation records selected before the
component outcomes by concept count and tokenized concept length. Each arm uses
the same records and three seeds with LLaDA-8B-Base, 32 generated tokens,
denoising budget $b=8$, and the same prompt and retained-support rule. Evidence,
carrier-emission, and ancestral temperatures are $(0.9,0.3,1.0)$. The
target-free carrier is either the fitted 64-state WikiText HMM (H) or a
one-state identity carrier (I). We recompile each record's lexical DFA after
generation and compute prompt-conditioned GPT-2-large PPL over the generated
continuations. All 5,376 outputs are nonempty and included in both metrics.

The comparisons keep the carrier and the lexical controller fixed within each
carrier setting. At a fixed query, let $g_0(v\mid s)$ denote the neutral action
conditional and $h(v,s)$ the neutral mass of accepting completions after action
$v$. Full assigns weight proportional to $g_0(v\mid s)h(v,s)$. Feasible keeps
the neutral weight $g_0(v\mid s)$ only when $h(v,s)>0$. It therefore removes
dead ends without using the relative mass of the remaining completions.
Neutral removes the terminal acceptance weight while retaining the same
candidate graph. Full-marginal computes the exact single-token marginals of
Full on the retained support at each fixed query, then draws the concurrently
committed tokens independently. Once different tokens are committed, later
denoiser queries and their supports may diverge.

\begin{table}[H]
\centering
\small
\caption{CommonGen component diagnostic on 256 reused validation inputs and
three seeds per arm. PPL is the prompt-conditioned GPT-2-large corpus PPL over
generated continuations. No output is excluded from scoring.}
\label{tab:commongen-components}
\begin{tabular}{lrr}
\toprule
Carrier and sampling rule & Lexical coverage (\%) & PPL $\downarrow$ \\
\midrule
H--Neutral & 22.01 & 16.81 \\
H--Feasible & 100.00 & 20.28 \\
H--Full & 100.00 & 18.93 \\
H--Full-marginal & 83.98 & 18.34 \\
I--Neutral & 30.99 & 19.56 \\
I--Feasible & 100.00 & 23.71 \\
I--Full & 100.00 & 21.97 \\
\bottomrule
\end{tabular}
\end{table}

H--Full and H--Full-marginal differ in lexical coverage by $16.02$ percentage
points, with a 95\% input-cluster bootstrap interval of $[13.15,18.75]$.
Both Full and Feasible attain 100\% coverage under H and I. Relative to
Feasible, Full reduces corpus token NLL by $0.0691$ $[0.0221,0.1148]$ under H
and $0.0764$ $[0.0319,0.1235]$ under I; brackets again give 95\% input-cluster
intervals. Each of 5,000 bootstrap draws resamples the 256 inputs while keeping
their three seed outputs together. The joint-versus-marginal result concerns lexical
coverage, not a PPL gain. These are development diagnostics on reused
validation records, not fresh held-out results. Unit ancestral temperature
makes the joint and marginal laws directly comparable but differs from the
main-table selected configurations. Their fixed-query exactness is conditional
on retained candidate support, up to floating-point error;
PPL measures an external language-model score rather than sentence quality.

\section{Reproducibility and Evidence Provenance}
\label{app:reproducibility}

Each displayed value binds one task, backbone, cohort, method, budget, metric,
and frozen configuration; values are not pooled across these contracts.
Reproduction constructs the data roles in
Tab.~\ref{tab:appendix-selection-contracts}, trains or loads the TRAIN-only
component, runs the DEV grids in Tab.~\ref{tab:appendix-search-axes}, applies
the deterministic selection rule, and evaluates the frozen configuration once.
Algs.~\ref{alg:reward-construction}--\ref{alg:strict-repair} specify the
computations. Failed outputs remain in the denominator, and comparisons require
shared task, split, prompt, evaluator, support, and budget contracts.

\section{Related Work}
\label{sec:related}

\paragraph{Discrete diffusion and dependence modeling.}
D3PMs generalize discrete diffusion beyond uniform corruption and include an
absorbing-state construction that connects diffusion with masked generation
\citep{austin2021structured}. Discrete Copula Diffusion supplements denoiser
predictions with a separately trained copula model to restore dependencies
between output variables and reduce the number of denoising steps
\citep{liu2025discretecopula}. Chain-structured conditional random fields give
a standard graphical-model treatment of conditional sequence dependence and
dynamic-programming inference \citep{sutton2012introduction}. These works
motivate the use of structured dependence models inside discrete generation,
while \method{} separately represents the target-free dependence carrier and
the sequence objective used for guidance.

\paragraph{Guidance for discrete diffusion.}
Existing methods steer discrete diffusion through intermediate predictors,
continuous relaxations, or repeated evaluation of candidate outputs.
Predictor-based methods estimate properties at noisy states and use those
estimates to modify the reverse transitions
\citep{nisonoff2025guidance,schiff2025simple}, while NOS optimizes continuous
denoiser representations \citep{gruver2023protein}. SVDD evaluates the future
value of candidate clean predictions \citep{li2025svdd}, and CSMC constructs a
Metropolis--Hastings chain over complete clean samples
\citep{phunyaphibarn2026csmc}. These approaches support broad classes of
objectives, but require either a reliable signal at intermediate noise levels
or repeated reward evaluation during sampling. Training such predictors also
requires task-specific examples across the corruption process, where the
attribute may be difficult to identify at high noise. \method{} instead starts
from a clean-sequence objective and compiles it into finite-state factors. This
avoids a separate neural attribute predictor at every noise level when the
objective admits a compact compiled representation.

\paragraph{Concurrent inference-time steering.}
Several concurrent methods steer frozen diffusion models through learned
twists, local prediction corrections, or scheduling policies. CDM amortizes a
twisted sequential Monte Carlo proposal with a learned twist
\citep{kim2026cdm}, while TRI-TSMC iteratively fits the twist through
trust-region updates \citep{wang2026tritsmc}. GILC modifies clean-prediction
logits using reward information \citep{dou2026gilc}; DLM-SWAI applies
precomputed token-level attribute biases \citep{an2026dlmswai}; and SAD steers
the denoiser away from unsafe regions \citep{yusuf2026sad}. DPRM instead uses a
Doob-transform process reward to change token ordering
\citep{bu2026dprm}, while MDM-VGB adds verifier-guided remasking and
backtracking \citep{jeon2026mdmvgb}. These approaches support broad objectives
through learned twists, local corrections, ordering, or search. \method{}
targets objectives that admit compact finite-state factors and computes their
continuation weights on the retained product graph.

\paragraph{Search for biological sequence design.}
Tree-guided diffusion has also been used for biological design. MP2D combines
conditional diffusion, constrained Monte Carlo tree search, and iterative
refinement for multi-objective protein design \citep{kong2026mp2d}.
DNA-CRAFT combines class-conditioned diffusion with Monte Carlo tree guidance
for cell-type-specific regulatory DNA design \citep{awasthi2026dnacraft}.
These methods explicitly search multiple denoising trajectories under external
objectives, whereas \method{} performs exact backward inference for the
compiled objective within each retained fixed-step graph.

\paragraph{Future-aware control through tractable completion models.}
Autoregressive control has likewise used predictions or tractable models of
future continuations. FUDGE trains a discriminator to predict whether a prefix
will eventually satisfy an attribute \citep{yang2021fudge}. GeLaTo distills a
hidden Markov model for tractable conditioning on lexical constraints
\citep{zhang2023gelato}, while Ctrl-G pairs such a model with a finite-state
logical constraint \citep{zhang2024ctrlg}. TRACE combines a distilled hidden
Markov model with a lightweight attribute model to aggregate future attribute
probabilities without sampling the continuations individually
\citep{weng2025trace}. \method{} extends this completion-based view to a
diffusion step in which unresolved variables may occur anywhere in the
sequence and must be treated as one joint clean reconstruction. It also
compiles count-additive soft preferences into weighted state transitions, so
the same backward-message algorithm supports both hard acceptance and soft
reward.

\paragraph{Compiled representations for structured inference.}
Knowledge compilation studies representations that make otherwise expensive
queries tractable \citep{darwiche2002knowledge}. A probabilistic circuit is a
directed computation graph that represents a distribution through sums and
products. Under appropriate structural compatibility conditions, circuit
operations such as products and marginalization remain tractable
\citep{vergari2021atlas}. Semantic Probabilistic Layers use this structure to
combine neural predictions with logical constraints \citep{ahmed2022spl}, and
Neurosymbolic Diffusion Models use discrete diffusion to capture dependencies
among symbols inside a neuro-symbolic predictor
\citep{vanKrieken2025neurosymbolic}. CoDD introduces a tractable probabilistic
layer that restores cross-position dependence to factorized diffusion-language
predictions \citep{li2026codd}. For sequence objectives, weighted automata
provide a finite-state algebra \citep{mohri2009weighted}, while
Aho--Corasick automata retain the suffix information needed to count
overlapping pattern occurrences \citep{aho1975efficient}. Recent constrained
diffusion methods use dynamic programming or finite automata to enforce hard
regular constraints \citep{suresh2025dingo,dang2026automata}. \method{}
composes a target-free dependence carrier with a separately compiled objective
and derives one product-state message-passing and sampling algorithm for hard
and soft sequence guidance.

\end{document}